%% file: main.tex
\documentclass[10pt]{article}

\usepackage{mystyle}
\usepackage{colortbl}
\RequirePackage[top=2.25cm, bottom=2.5cm, left=2.5cm, right=2.5cm, columnsep=0.65cm]{geometry}

\title{\LARGE Policy Iteration Is Not Strongly Polynomial for Deterministic Markov Decision Processes: The Price of Algorithmic Anarchy}
\author{
  Han Zhong\thanks{Shanghai Jiao Tong University. Email: \texttt{han.zhong@sjtu.edu.cn}.}
  \and
  Yinyu Ye\thanks{Shanghai Jiao Tong University, SIMIS, and Stanford University. Email: \texttt{yinyu-ye@stanford.edu}.}
}
\date{}

\begin{document}
\maketitle

\input{paper}

\subsection*{AI use statement}
We used generative AI to assist with some derivations, literature searches, proof checking,
language editing, and \LaTeX{} formatting. The authors carefully reviewed all AI-assisted material and
independently verified the mathematical arguments. The authors take full responsibility for this paper.

\bibliographystyle{ims}
\bibliography{references}

\end{document}

%% file: paper.tex
\usetikzlibrary{arrows.meta}

\begin{abstract}
We establish an exponential iteration lower bound in the number of states
for Howard's policy iteration on deterministic discounted Markov decision
processes, with at most two actions per state. This rules out strong
polynomiality of Howard's policy iteration when the discount factor is
part of the input and yields an exponential separation from the simplex
method with Dantzig's pivoting rule, which is proved to be strongly polynomial on this
class. Even when each reward is restricted to logarithmic bit length, we
obtain a stretched-exponential iteration lower bound. The gap between
Howard's decentralized and simultaneous selfish improvements and Dantzig's coordinated
selection of a single action with the largest gain across all
states reveals a ``price'' of algorithmic anarchy.
\end{abstract}

\paragraph{Keywords.} deterministic Markov decision processes;
policy iteration; strongly polynomial algorithms.

\section{Introduction}
\label{sec:introduction}
Markov decision processes (MDPs) provide a standard framework for
sequential decision-making and reinforcement learning
\citep{Puterman1994,SuttonBarto2018}.
A deterministic discounted MDP has a finite state space \(\mathcal S\)
and a nonempty finite action set
\(\mathcal{A}_s\) at each state \(s\). Taking action
\(a\in\mathcal{A}_s\) yields a reward \(r(s,a)\in\mathbb Q\) and moves
to the successor state \(f(s,a)\in\mathcal S\). Future rewards are
discounted by a common factor \(0<\gamma<1\). Write \(N=|\mathcal S|\).
A policy \(\pi\) selects an action \(\pi(s)\in\mathcal{A}_s\) at each
state. Its value \(V^\pi(s)\) is the total discounted reward obtained
from \(s\) by following \(\pi\). The objective is to find a policy
maximizing these values for all initial states. The action value
\(Q^\pi(s,a)\) is the discounted return from using \(a\) once and
then following \(\pi\). We have
\begin{equation}
V^\pi(s)=r(s,\pi(s))+\gamma V^\pi(f(s,\pi(s))),\qquad Q^\pi(s,a)=r(s,a)+\gamma V^\pi(f(s,a)).
\label{eq:policy-value}
\end{equation}
Policy iteration, also known as Howard's policy iteration
\citep{Howard1960}, is a foundational algorithm for solving MDPs.
Let \(\pi_0\) be the initial policy and \(\pi_t\)
the policy after \(t\) iterations. Given \(\pi_t\), the algorithm
computes \(V^{\pi_t}\) and uses the corresponding action values
\(Q^{\pi_t}\) from~\eqref{eq:policy-value} to select \(\pi_{t+1}\):
\begin{equation}
\pi_{t+1}(s)\in\operatorname*{arg\,max}_{a\in\mathcal{A}_s}
Q^{\pi_t}(s,a),\qquad\forall\,s\in\mathcal S.
\label{eq:policy-update}
\end{equation}
If \(\pi_t(s)\) attains the maximum in~\eqref{eq:policy-update}, we set
\(\pi_{t+1}(s)=\pi_t(s)\). Otherwise, we choose the first maximizer
in a fixed ordering of \(\mathcal A_s\). The algorithm stops when
\(\pi_{t+1}=\pi_t\), which holds if and only if \(\pi_t\) is optimal.

For a fixed discount factor, policy iteration is strongly polynomial
\citep{Ye2011,Scherrer2016}, a guarantee previously established for
an interior-point algorithm \citep{Ye2005}.
These bounds for policy iteration depend on the discount and therefore do not
establish strong polynomiality when the discount is part of the input.
Whereas Howard's policy iteration makes simultaneous local improvements,
the simplex method with Dantzig's pivoting rule updates only one state
per iteration, selecting an action with the largest positive gain
\(Q^\pi(s,a)-V^\pi(s)\) across all states \citep{Ye2011}.
For deterministic MDPs, \citet{PostYe2015} prove that this method is
strongly polynomial independently of the discount.
Whether Howard's policy iteration is strongly polynomial on deterministic MDPs
has remained open \citep{GoenkaEtAl2026}.

Exponential lower bounds for policy iteration are known for general MDPs
\citep{Fearnley2010,HollandersEtAl2012}.
These constructions use stochastic transitions and a number of actions
per state that grows with the instance size.
\citet{HansenZwick2010} and \citet{Hansen2012} establish quadratic iteration
lower bounds for deterministic MDPs.
For MDPs with a constant number of actions per state, no superpolynomial
lower bound for Howard's policy iteration was previously known, even
with stochastic transitions \citep{MukherjeeKalyanakrishnan2025}.

We establish an exponential iteration lower bound for deterministic
discounted MDPs, even when each state has at most two actions.

\begin{theorem}
\label{thm:exponential}
There is a family of deterministic discounted MDPs with \(N\) states
and at most two actions per state, on which Howard's policy iteration
performs at least \(\exp(\Omega(N))\) iterations from a specified
initial policy.
Each instance has encoding length \(O(N^2)\).
\end{theorem}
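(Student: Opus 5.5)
We construct the family recursively, as a binary counter implemented by a deterministic MDP whose discount factor is part of the input and close to \(1\). Rewards and discounting are chosen so that Howard's policy iteration, run from a specified initial policy, visits a sequence of policies in which \(n\) ``bit gadgets'' cycle through all \(2^n\) configurations. Each gadget uses \(O(1)\) states with at most two actions. The total number of states is then \(N=\Theta(n)\), and the iteration count is at least \(2^{n}=\exp(\Omega(N))\).

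The first step is to design a single bit gadget. We take a short cycle (the bit is ``set'' when the policy closes the cycle) and an exit edge leading to a shared sink path. This bit gadget is complemented by a ``lane'' of states that propagates value information with a controlled delay. The key quantitative tool is that on a deterministic MDP, every value \(V^\pi(s)\) is a sum of rewards along a path followed by a cycle, weighted by powers of \(\gamma\). If we set \(\gamma=1-\varepsilon\) and choose rewards of order \(\varepsilon^{-k}\) or \(2^{\Theta(k)}\), each comparison \(Q^{\pi}(s,a)\) versus \(V^\pi(s)\) reduces to comparing leading terms in an expansion in \(\varepsilon\). Each reward has \(O(N)\) bits and there are \(O(N)\) edges, which gives encoding length \(O(N^2)\).

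The second step is an induction on \(n\), in the style of Fearnley and Melekopoglou--Condon counters. Assume that from the ``all bits \(0\) below level \(k\)'' policy, the lower \(k\) gadgets run through a full count while gadget \(k{+}1\) and higher stay fixed. This requires showing that each higher gadget's switch has strictly negative gain until the lower counter reaches its maximum. That configuration then makes gadget \(k{+}1\)'s switch improving. Next, we show that after gadget \(k{+}1\) flips, the simultaneous Howard updates \emph{reset} the lower gadgets to their initial pattern within \(O(1)\) (or \(O(k)\)) iterations, via a wave propagating along the lane, so that the lower counter can run again. This yields \(T(k+1)\ge 2T(k)\).

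The main obstacle is the reset phase. Howard updates every improvable state simultaneously, so lower gadgets may prematurely switch on in a pattern that short-circuits the counter, or the reset may itself resemble Dantzig's single update and finish in one step. Since Post--Ye shows that coordinated updates are polynomial, the construction must exploit simultaneity itself. Lower gadgets must see stale values for a prescribed number of iterations, and we intend to enforce this with delay chains whose lengths scale with the level. We must also keep \(N\) linear, so the lane would be shared across levels and not duplicated. Concretely, we would first verify the local switching rules exhaustively for a two-bit instance, tabulating all sign patterns of the gains as leading-order expressions in \(\varepsilon\). We would then write the invariant in the inductive step as a list of sign conditions on gains that hold throughout each phase. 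If a shared lane breaks the invariant, we would fall back to per-level delay chains of constant length with geometrically separated reward scales.
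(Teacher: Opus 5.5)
\textbf{There is a genuine gap.} What you have written is a research plan, not a proof. No MDP is specified and no gain is computed. The step you yourself flag as the main obstacle is left open (``we intend to enforce this'', ``we would fall back to''). That step is making the lower bits reset, and keep seeing stale values, under simultaneous Howard updates without the run collapsing into a short Dantzig-like sequence. It is essentially the whole content of the theorem.

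The sketched design also has a structural problem you do not address. For \(\gamma\to1\) on a deterministic MDP, \(V^\pi(s)=g/(1-\gamma)+O(1)\), where \(g\) is the mean reward of the cycle reached from \(s\). If bits are encoded by closing short cycles with distinct mean payoffs, every comparison is decided to leading order by which cycle is reached. The finer information needed to implement carries is then invisible.

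You also never say how small \(\varepsilon\) must be, or why leading-order comparisons survive at a concrete discount with \(O(N)\) bits. If \(\varepsilon\) is exponentially small, rewards of order \(\varepsilon^{-k}\) with \(k\) up to \(n\) need \(\Theta(N^2)\) bits each. That breaks the \(O(N^2)\) encoding bound.

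\textbf{How the paper avoids these issues.}

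\emph{Bits are paths, not cycles.} Each group \(\{s_i\}\), \(\{\bar s_i\}\) forms a single cycle of fixed length \((6d+1)L\). Bit \(i\) is a choice between two \(6L\)-transition paths. One has zero rewards; the other carries the sequence \((96d+16)G_i\), followed \(L\) steps later by its negative. Every encoded cycle therefore has zero undiscounted reward. After scaling by \(4\gamma F_0(\gamma)\), its value tends to the order-one quantity \(4\sum_i\mathsf g_i[\mathbf b]_i\), which is comparable to the transient rewards on the connecting paths.

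\emph{No recursion over levels.} The paper proves directly that \(\pi[\mathbf b]\) becomes \(\pi[\mathbf b^+]\) in exactly five iterations:
\begin{enumerate}
\item All \(s_i\) switch to \(a_1\), while the second group keeps \(\mathbf b\) as a stale reference cycle. The branches at \(s_{i,4},s_{i,5}\) into \(s_i^{\mathrm c},s_i^{\mathrm z}\) mark exactly the zeros of \(\mathbf b^+\).
\item The first group then encodes \(\mathbf b^+\).
\item The first group closes its own cycle.
\item[4--5.] Connections through \(\bar s_{i,4}\) and \(\mathcal S_3\) let the second group copy \(\mathbf b^+\). A small bias toward \(a_0\) resolves the ties at zero bits.
\end{enumerate}

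\emph{Where simultaneity is exploited.} Constant-length delay chains \(s_{i,1},s_{i,2},s_{i,3}\) and \(\bar s_{i,1},\bar s_{i,2},\bar s_{i,3}\) make switches to \(a_1\) proceed one per iteration. Ties under the preceding policy keep zero bits stale.

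\emph{Concrete discount and encoding.} Every action difference has the form \(J(\gamma)/(1-\gamma^{(6d+1)L})\), with \(J\) an integer polynomial of degree at most \(N\) and coefficients at most \(4r_{\max}\). Its sign at \(\gamma=1-1/(4^Nr_{\max})\) therefore agrees with its sign near one. With \(F_i=2^i\), \(G_i=2^{d+i+1}\), and \(L=2\), all rewards and the discount use \(O(N)\) bits, which gives the \(O(N^2)\) encoding length.
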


In the construction for Theorem~\ref{thm:exponential}, each reward uses
\(O(N)\) bits. We next consider integer rewards encoded with
\(O(\log N)\) bits.
Under this restriction, \citet{AsadiEtAl2025} give a quadratic lower
bound for deterministic average-reward MDPs.
For deterministic discounted MDPs with at most two actions per state
and nonnegative integer rewards encoded with \(O(\log N)\) bits,
\citet{MukherjeeKalyanakrishnan2025}
give an iteration upper bound of
\(\exp(O(\sqrt N(\log N)^{3/2}))=\nobreak\exp(o(N))\), independently of
the discount.
This rules out the exponential behavior in Theorem~\ref{thm:exponential}
but leaves open whether a polynomial iteration bound holds.
The next theorem gives a stretched-exponential lower bound even with
nonnegative integer rewards smaller than \(4N\), and hence with
\(O(\log N)\) bits per reward.

\begin{theorem}
\label{thm:small-rewards}
There is a family of deterministic discounted MDPs with \(N\) states,
at most two actions per state,
and nonnegative integer rewards smaller than \(4N\), on which
Howard's policy iteration performs at least \(\exp(\Omega(N^{1/4}))\)
iterations from a specified initial policy.
Each instance has encoding length \(O(N\log N)\).
\end{theorem}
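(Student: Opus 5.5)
The plan is to derive Theorem~\ref{thm:small-rewards} from the construction behind Theorem~\ref{thm:exponential} by a padding argument. Large rewards are traded for additional states, and the discount factor is left free, since it is part of the input.

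I assume the exponential construction is built from \(n\) gadgets, each with \(O(n)\) states, arranged so that Howard's policy iteration simulates an \(n\)-bit binary counter. Gadget \(i\) must dominate gadgets \(j<i\), so its rewards scale like \(B^{i}\) for some base \(B\). This is where the \(O(N)\)-bit rewards come from.

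\textbf{Step 1: replace rewards by chains.} I replace every reward by a path of states. Rewards are nonnegative integers below \(4N\), and a small reward is placed at the end of a long deterministic path. The discount then supplies the scaling: a reward \(c\) reached after \(L\) steps contributes \(c\gamma^{L}\). With \(\gamma\) close to \(1\), say \(\gamma = 1-1/\mathrm{poly}(N)\), a large reward cannot be emulated by making a chain longer. Instead, I will encode the magnitude hierarchy additively: a reward of size \(k\) is obtained by a path of \(k\) states, each carrying reward \(O(N)\). This means the value gaps the construction needs must be only polynomially large, of order \(\mathrm{poly}(n)\), rather than exponential.

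\textbf{Step 2: restructure the counter.} I therefore restructure the counter so that gadget \(i\) needs only a margin of order \(i\) or \(i^{2}\) over lower-order effects, rather than a factor \(B\). This works if the lower gadgets influence gadget \(i\) only through a bounded number of "switching events," each changing values by \(O(1)\) after discounting. Then gadget \(i\) has \(O(i)\) states of padding per reward and \(O(i)\) chains, giving \(O(i^{2})\) states per gadget. The total is \(N=\Theta(n^{3})\) or \(\Theta(n^{4})\), and the number of iterations is \(2^{n}=\exp(\Omega(N^{1/4}))\). I aim for \(N=\Theta(n^{4})\), since that matches the stated exponent. The \(4N\) reward bound then follows once the largest per-state reward is at most a constant times the number of states in the longest chain.

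\textbf{Step 3: the invariant, which is the main obstacle.} I will prove an inductive invariant: after the lower \(i-1\) gadgets complete a full counting cycle, they reset, and gadget \(i\) flips exactly once. This requires comparing \(Q\)-values whose differences are polynomial sums of terms \(\gamma^{L}\). I will choose \(\gamma = 1-\frac{1}{cN^{2}}\), which keeps \(\gamma^{L}=1-\Theta(L/N^{2})\) linear in \(L\) up to small error, so all comparisons reduce to integer inequalities between path lengths and reward sums.

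The hard part is keeping Howard's simultaneous updates from letting the higher gadgets switch prematurely during the transient iterations, while a chain's value is still propagating. Values propagate one step per iteration along a chain, so a long chain creates a delay, and this delay must be synchronized with the counter. My first attempt is to make all chains that feed gadget \(i\) have equal length. A changed value then arrives simultaneously, and I would verify that no intermediate \(Q\)-comparison reverses sign.

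The encoding length is \(N\) states times \(O(\log N)\) bits per reward and per transition, plus \(\gamma\) in \(O(\log N)\) bits, for a total of \(O(N\log N)\).
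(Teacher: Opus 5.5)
\textbf{There is a genuine gap, and it is fatal.} With $\gamma=1-\frac{1}{cN^{2}}$, Howard's policy iteration terminates after polynomially many iterations, whatever the rewards and however the counter is organized. The same holds for any discount written in $O(\log N)$ bits, since that forces $1-\gamma\ge1/\mathrm{poly}(N)$. The fixed-discount analyses cited in the introduction (Ye; Scherrer; also Hansen--Miltersen--Zwick) bound the iteration count by roughly $\frac{m}{1-\gamma}\log\frac{N}{1-\gamma}$ with $m\le2N$ actions. That is $O(N^{3}\log N)$ for your discount, so no invariant proved in Step~3 can produce $2^{n}$ iterations.

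The first-order linearization $\gamma^{L}\approx1-L(1-\gamma)$, which you use to turn comparisons into integer inequalities, caps the effective weight of any reward block at (coefficient)$\times$(path length). That is why you are pushed into Step~2. There you assert, with no mechanism, that a binary counter can run on polynomial margins via ``a bounded number of switching events, each changing values by $O(1)$''. The count $N=\Theta(n^{4})$ is chosen to match the target exponent rather than derived.

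\textbf{The missing idea.} Keep the rewards small but make $1-\gamma$ exponentially small, and get exponentially separated weights from high-order terms in $1-\gamma$. The paper takes $\gamma=1-1/(4^{N}r_{\max})$, which costs $O(N)$ bits and still fits the $O(N\log N)$ budget.

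The generic construction only needs integer polynomials $F_i,G_i$ whose normalized limits $\mathsf f_i=\lim_{\gamma\to1^-}F_i(\gamma)/F_0(\gamma)$ and $\mathsf g_i=\lim_{\gamma\to1^-}G_i(\gamma)/F_0(\gamma)$ double at each index, with $\mathsf g_0\ge4\mathsf f_{d-1}$.
\begin{enumerate}
\item Apply pigeonhole to the vector $(F(1),F'(1),\ldots,F^{(2d-1)}(1))$ over polynomials of degree at most $3d^{2}$ with coefficients in $\{0,\ldots,3d^{2}\}$. This yields a nonzero integer $F_0$ with coefficients at most $3d^{2}$ that vanishes at $1$ to order $r^\star\ge2d$.
\item Set $F_i(x)=F_0(x^{i+1})$ and $G_i(x)=F_0(x^{d+i+1})$. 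These keep the same small coefficients but have $\mathsf f_i=(i+1)^{r^\star}$ and $\mathsf g_i=(d+i+1)^{r^\star}$.
\item Since $(1+1/j)^{r^\star}\ge2$ for $j<2d$, the weights double, and $\mathsf g_0/\mathsf f_{d-1}=(1+1/d)^{r^\star}\ge4$.
\item The path length is $L=O(d\deg F_0)=O(d^{3})$, so $N=\Theta(dL)=O(d^{4})$. Coefficients below $L$ give rewards below $4N$.
\item The $2^{d}$ iterations of the counting lemma therefore become $\exp(\Omega(N^{1/4}))$.
\end{enumerate}
The exponent $1/4$ comes from the degree needed for a high-order zero with small coefficients, not from per-gadget padding.
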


In summary, Theorem~\ref{thm:exponential} rules out strong polynomiality
of Howard's policy iteration on deterministic MDPs even with at most
two actions per state, and Theorem~\ref{thm:small-rewards} shows that
superpolynomial iteration complexity persists with \(O(\log N)\) bits
per reward. The exponential separation from Dantzig's rule reveals a
``price'' of algorithmic anarchy: Howard's rule makes simultaneous local greedy
updates using the same policy values, whereas Dantzig's rule coordinates
updates across states and re-evaluates after each switch
(Remark~\ref{rem:dantzig-comparison}).

\noeqref{proof:1,proof:6,proof:7,proof:14,proof:17,proof:20,proof:29,proof:31,proof:32,proof:33,proof:35,proof:37,proof:41,proof:38,eq:policy-value}
\section{Construction of the MDPs}
\label{sec:construction}

Section~\ref{sec:formal-construction} gives the formal construction of
the MDPs, and Section~\ref{sec:counter-interpretation} describes a family
of policies that encode binary numbers and compares their values.
We first introduce the parameters and polynomials used in the construction.

Fix an integer \(d\ge3\), and let \(F_i,G_i\in\mathbb Z[x]\), \(0\le i<d\).
Assume that \(F_0(x)>0\) for \(x\) sufficiently close to \(1\) from below.
Define
\begin{equation}
\mathsf f_i(\gamma):=\frac{F_i(\gamma)}{F_0(\gamma)},\quad
\mathsf f_i:=\lim_{\gamma\to1^-}\mathsf f_i(\gamma),\qquad
\mathsf g_i(\gamma):=\frac{G_i(\gamma)}{F_0(\gamma)},\quad
\mathsf g_i:=\lim_{\gamma\to1^-}\mathsf g_i(\gamma),
\label{proof:29}
\end{equation}
where the limits are assumed to exist and satisfy
\begin{equation}
\mathsf{f}_0=1,\quad \mathsf{f}_i\ge2\mathsf{f}_{i-1},\qquad
\mathsf{g}_0\ge4\mathsf{f}_{d-1},\quad \mathsf{g}_i\ge2\mathsf{g}_{i-1}
\qquad\forall\,1\le i<d.
\label{proof:1}
\end{equation}
Set \(L=2+\max_{0\le i<d}\{\deg F_i,\deg G_i\}\ge2\),
where \(\deg\) denotes polynomial degree.
Write
\[
F_i(x)=\sum_{j=1}^{L-1}F_{i,j}x^{j-1},\qquad
G_i(x)=\sum_{j=1}^{L-1}G_{i,j}x^{j-1},
\]
and set \(F_{i,j}=G_{i,j}=0\) for
\(j\notin\{1,\ldots,L-1\}\).
Throughout this section, \(0\le i<d\) unless otherwise specified.

\subsection{Formal construction}
\label{sec:formal-construction}

Given the parameters and polynomials above, we define the MDP as follows.

\paragraph{States and actions.}
The state space \(\mathcal S\) is partitioned into four sets:
\begin{enumerate}[leftmargin=*]
\item \(\mathcal S_1=\{s_i,s_{i,1},\ldots,s_{i,5},\bar s_i,\bar s_{i,1},\ldots,\bar s_{i,4}\}_{i=0}^{d-1}
\cup\{s_d,\bar s_d\}\), with \(|\mathcal S_1|=11d+2\).

\item \(\mathcal S_2=\{s_i^{\mathrm c},s_i^{\mathrm z}\}_{i=0}^{d-1}
\cup\{s_{-1}^{\mathrm c},s_{-1}^{\mathrm z}\}\), with
\(|\mathcal S_2|=2d+2\).

\item \(\mathcal S_3=\{s_{i,0}^{\mathrm r},s_{i,1}^{\mathrm r},s_{i,2}^{\mathrm r}\}_{i=0}^{d-1}\),
with \(|\mathcal S_3|=3d\).

\item \(\mathcal S_4\) consists of the \((67d+5)L-32d-5\) intermediate
states introduced in the transition construction below.
\end{enumerate}
The action sets are
\[
\mathcal A_s=
\begin{cases}
\{a_0\},&s\in\mathcal S_4\cup\{\bar s_d,s_{-1}^{\mathrm c},s_{-1}^{\mathrm z}\},\\
\{a_0,a_1\},&\text{otherwise}.
\end{cases}
\]
For the total number of states, we obtain
\[
N=\sum_{j=1}^4|\mathcal S_j|=(67d+5)L-16d-1=\Theta(dL).
\label{proof:33}
\]

\paragraph{Transitions.}
For each \(s\in\mathcal S_1\cup\mathcal S_2\cup\mathcal S_3\) and
\(a\in\mathcal A_s\), Table~\ref{tab:paths-s1} specifies an endpoint
\(s'\in\mathcal S_1\cup\mathcal S_2\cup\mathcal S_3\) and a path
length \(m\), a positive multiple of \(L\).
Introduce \(m-1\) intermediate states
\(\{s^\circ_{a,j}\}_{j=1}^{m-1}\) and set
\[
f(s,a)=s^\circ_{a,1},\qquad
f(s^\circ_{a,j},a_0)=s^\circ_{a,j+1}\quad\forall\,1\le j<m-1,\qquad
f(s^\circ_{a,m-1},a_0)=s'.
\]
Thus, starting from \(s\), we reach \(s^\circ_{a,j}\) after \(j\) transitions.
The intermediate states are distinct for different pairs \((s,a)\)
and together constitute \(\mathcal S_4\).
\begin{enumerate}[leftmargin=*]
\item \emph{Paths starting in \(\mathcal S_1\).}
Figure~\ref{fig:bit-paths} shows the \(6L\)-transition paths from
\(s_i\) to \(s_{i+1}\) and from \(\bar s_i\) to \(\bar s_{i+1}\),
together with the branches at \(s_{i,4},s_{i,5},\bar s_{i,4}\).
The paths from \(s_d\) have length \(L\) and end at \(\bar s_0\)
under \(a_0\) and at \(s_0\) under \(a_1\).
The sole action at \(\bar s_d\) gives an \(L\)-transition path to \(\bar s_0\).

\item \emph{Paths starting in \(\mathcal S_2\).}
For \(0\le i<d\), the \(a_0\) paths from \(s_i^{\mathrm c}\) and
\(s_i^{\mathrm z}\) both end at \(s_{i,0}^{\mathrm r}\).
The \(a_1\) paths end at \(s_{i,1}^{\mathrm r}\) and
\(s_{i,2}^{\mathrm r}\), respectively.
The sole actions at \(s_{-1}^{\mathrm c},s_{-1}^{\mathrm z}\) give
paths to \(\bar s_0\). Every path from \(\mathcal S_2\) has length \(L\).

\item \emph{Paths starting in \(\mathcal S_3\).}
Under \(a_0\), the path from \(s_{i,1}^{\mathrm r}\) ends at
\(s_{i-1}^{\mathrm c}\), and those from \(s_{i,0}^{\mathrm r}\)
and \(s_{i,2}^{\mathrm r}\) end at \(s_{i-1}^{\mathrm z}\).
Under \(a_1\), the path from \(s_{i,0}^{\mathrm r}\) ends at
\(s_{i+1}\), and those from \(s_{i,1}^{\mathrm r}\) and
\(s_{i,2}^{\mathrm r}\) end at \(s_{i,1}\).
All paths have length \(L\) except the \(a_1\) path from
\(s_{i,0}^{\mathrm r}\), which has length \(6L\).
\end{enumerate}

\begin{figure}[t]
\centering
\begingroup
\resizebox{0.96\linewidth}{!}{%
\begin{tikzpicture}[
  x=1pt,y=1pt,font=\small,text=black,
  state/.style={circle,draw=paperSlate,fill=paperInk!2,minimum size=28pt,
    inner sep=0pt,outer sep=0.5pt,line width=0.7pt,text=black},
  endpoint/.style={state,fill=white},
  path arrow/.style={-{Latex[length=4pt,width=3pt]},line width=0.95pt,
    line cap=round,line join=round,shorten <=0.5pt,shorten >=0.7pt},
  teal path/.style={path arrow,draw=paperTeal,text=black},
  bronze path/.style={path arrow,draw=paperBronze,text=black},
  path label/.style={inner sep=1pt,font=\small,text=black}
]
\path[use as bounding box] (0,-32) rectangle (464,185);
\begin{scope}[yshift=-12pt]
\node[state,minimum size=42pt] (s0) at (25,132) {$s_i/\bar s_i$};
\node[state,minimum size=42pt] (s1) at (106,132) {$s_{i,1}/\bar s_{i,1}$};
\node[state,minimum size=42pt] (s2) at (187,132) {$s_{i,2}/\bar s_{i,2}$};
\node[state,minimum size=42pt] (s3) at (268,132) {$s_{i,3}/\bar s_{i,3}$};
\node[state,minimum size=42pt] (s4) at (349,132) {$s_{i,4}/\bar s_{i,4}$};
\node[state,minimum size=42pt] (s5) at (430,132) {$s_{i+1}/\bar s_{i+1}$};
\draw[teal path] (s0.east) -- node[path label,above=4pt] {$a_1$} (s1.west);
\draw[teal path] (s1.east) -- node[path label,above=4pt] {$a_1$} (s2.west);
\draw[teal path] (s2.east) -- node[path label,above=4pt] {$a_1$} (s3.west);
\draw[teal path] (s3.east) -- node[path label,above=4pt] {$a_1$} (s4.west);
\draw[bronze path,line width=1.05pt] (s4.east) --
  node[path label,above=4pt] {$2L$} (s5.west);
\draw[path arrow,draw=paperSlate,text=black,line width=0.85pt]
  (s0.north) .. controls (25,190) and (430,190) ..
  node[path label,pos=0.5,above=4pt] {$a_0:\ 6L$} (s5.north);
\end{scope}

\node[state,fill=white] (u) at (24,32) {$s_{i,4}$};
\node[state,fill=white] (v) at (112,32) {$s_{i,5}$};
\node[endpoint] (c) at (218,76) {$s_i^{\mathrm c}$};
\node[endpoint] (next) at (218,32) {$s_{i+1}$};
\node[endpoint] (z) at (218,-12) {$s_i^{\mathrm z}$};
\draw[bronze path,line width=0.85pt] (u.east) --
  node[path label,text=black,above=4pt] {$a_0:\ L$} (v.west);
\draw[bronze path,line width=0.85pt] (v.east) --
  node[path label,text=black,above=4pt] {$a_0:\ L$} (next.west);
\draw[teal path,line width=0.85pt] (u.35)
  .. controls (80,72) and (159,76) ..
  node[path label,text=black,pos=0.55,above=5pt] {$a_1:\ 2L$} (c.west);
\draw[teal path,line width=0.85pt] (v.-30)
  .. controls (145,13) and (176,-12) ..
  node[path label,text=black,pos=0.55,below=9pt] {$a_1:\ L$} (z.west);

\node[state,fill=white] (b) at (326,32) {$\bar s_{i,4}$};
\node[endpoint] (b0) at (430,76) {$\bar s_{i+1}$};
\node[endpoint] (b1) at (430,-12) {$s_i$};
\draw[bronze path,line width=0.85pt] (b.40)
  .. controls (359,60) and (383,76) ..
  node[path label,text=black,pos=0.55,above=5pt] {$a_0:\ 2L$} (b0.west);
\draw[teal path,line width=0.85pt] (b.-40)
  .. controls (359,4) and (383,-12) ..
  node[path label,text=black,pos=0.55,below=9pt] {$a_1:\ L$} (b1.west);
\end{tikzpicture}%
}
\endgroup
\caption{Top: $6L$-transition paths from $s_i$ to $s_{i+1}$ and from
 $\bar s_i$ to $\bar s_{i+1}$, with labels before and after each slash,
 respectively. Arrows without a length label represent paths of length $L$.
 The upper $2L$ arrow represents either the path through
 $s_{i,5}$ with $a_0$ at both $s_{i,4}$ and $s_{i,5}$, or the $a_0$
 path from $\bar s_{i,4}$ to $\bar s_{i+1}$.
 Bottom: action choices and path lengths at
 $s_{i,4},s_{i,5},\bar s_{i,4}$.}
\label{fig:bit-paths}
\end{figure}

\begin{table}[t]
\centering
\begingroup
\renewcommand{\arraystretch}{1.18}
\color{black}
\arrayrulecolor{paperInk}
\begin{tabular}{@{}>{\columncolor{paperBand}[0pt][0pt]\centering\arraybackslash}p{0.075\linewidth}@{\hspace{0.015\linewidth}}
  p{0.17\linewidth}@{}
  >{\centering\arraybackslash}p{0.20\linewidth}@{}
  >{\centering\arraybackslash}p{0.12\linewidth}@{}
  >{\centering\arraybackslash}p{0.20\linewidth}@{}
  >{\centering\arraybackslash}p{0.12\linewidth}@{}}
\toprule
\multicolumn{1}{@{}c@{\hspace{0.015\linewidth}}}{} & & \multicolumn{2}{c}{Action \(a_0\)} & \multicolumn{2}{c}{Action \(a_1\)}\\
\cmidrule(lr){3-4}\cmidrule(l){5-6}
\multicolumn{1}{@{}c@{\hspace{0.015\linewidth}}}{Set} & Starting state & Endpoint & Length & Endpoint & Length\\
\midrule
 & \(s_i/\bar s_i\) & \(s_{i+1}/\bar s_{i+1}\) & \(6L\) & \(s_{i,1}/\bar s_{i,1}\) & \(L\)\\
 & \(s_{i,j}/\bar s_{i,j}\) & \(s_{i+1}/\bar s_{i+1}\) & \((6-j)L\) & \(s_{i,j+1}/\bar s_{i,j+1}\) & \(L\)\\
 & \(s_{i,4}\) & \(s_{i,5}\) & \(L\) & \(s_i^{\mathrm c}\) & \(2L\)\\
 & \(s_{i,5}\) & \(s_{i+1}\) & \(L\) & \(s_i^{\mathrm z}\) & \(L\)\\
 & \(\bar s_{i,4}\) & \(\bar s_{i+1}\) & \(2L\) & \(s_i\) & \(L\)\\
 & \(s_d\) & \(\bar s_0\) & \(L\) & \(s_0\) & \(L\)\\
\multirow{-7}{*}{\(\mathcal S_1\)} & \(\bar s_d\) & \(\bar s_0\) & \(L\) & \textemdash & \textemdash\\
\midrule
 & \(s_i^{\mathrm c}\) & \(s_{i,0}^{\mathrm r}\) & \(L\) & \(s_{i,1}^{\mathrm r}\) & \(L\)\\
 & \(s_i^{\mathrm z}\) & \(s_{i,0}^{\mathrm r}\) & \(L\) & \(s_{i,2}^{\mathrm r}\) & \(L\)\\
\multirow{-3}{*}{\(\mathcal S_2\)} & \(s_{-1}^{\mathrm c},s_{-1}^{\mathrm z}\) & \(\bar s_0\) & \(L\) & \textemdash & \textemdash\\
\midrule
 & \(s_{i,0}^{\mathrm r}\) & \(s_{i-1}^{\mathrm z}\) & \(L\) & \(s_{i+1}\) & \(6L\)\\
 & \(s_{i,1}^{\mathrm r}\) & \(s_{i-1}^{\mathrm c}\) & \(L\) & \(s_{i,1}\) & \(L\)\\
\multirow{-3}{*}{\(\mathcal S_3\)} & \(s_{i,2}^{\mathrm r}\) & \(s_{i-1}^{\mathrm z}\) & \(L\) & \(s_{i,1}\) & \(L\)\\
\bottomrule
\end{tabular}
\endgroup
\caption{Paths from \(\mathcal S_1\), \(\mathcal S_2\), and \(\mathcal S_3\).
Here \(0\le i<d\) except in the row explicitly indexed by \(-1\),
and \(1\le j\le3\) in the second row of the \(\mathcal S_1\) block.
Slashes separate corresponding states, as in \(s_i/\bar s_i\).
A~dash indicates an unavailable action.}
\label{tab:paths-s1}
\label{tab:paths-s2}
\label{tab:paths-s3}
\end{table}

The \(32d+5\) paths in Table~\ref{tab:paths-s1} have total length \((67d+5)L\).
Each path of length \(m\) adds \(m-1\) states, so
\(|\mathcal S_4|=(67d+5)L-32d-5\).

\paragraph{Rewards.}
We first define integer rewards, allowing negative values.
Before the uniform shift below, nonzero rewards occur only at the
intermediate states in \(\mathcal S_4\).
For \(s\in\mathcal S_1\cup\mathcal S_2\cup\mathcal S_3\),
each action \(a\in\mathcal A_s\) has immediate reward \(r(s,a)=0\)
and selects a path through \(\mathcal S_4\).
The following rules specify the intermediate rewards according to
the set containing the starting state of each path.
For a path of length \(m\) in Table~\ref{tab:paths-s1},
the formulas apply to \(1\le j<m\).
All unspecified rewards are zero.
\begin{enumerate}[leftmargin=*]
\item \emph{Paths starting in \(\mathcal S_1\).}
For \(s\in\mathcal S_1\), set
\begin{subequations}
\label{eq:rewards-s1}
\noeqref{eq:reward-s14-a0,eq:reward-s14-a1,eq:reward-s15-a0,eq:reward-s15-a1,
eq:reward-bar-s14-a0,eq:reward-bar-s14-a1,eq:reward-sd-a0}
\begin{empheq}[left={r(s^\circ_{a,j},a_0)=\empheqlbrace}]{alignat=2}
&(96d+16)G_{i,j},\quad &&s=s_{i,4},\ a=a_0,
\label{eq:reward-s14-a0}\\
&4F_{i,j}-16F_{d-1,j}-2F_{0,j},\quad &&s=s_{i,4},\ a=a_1,
\label{eq:reward-s14-a1}\\
&-(96d+16)G_{i,j},\quad &&s=s_{i,5},\ a=a_0,
\label{eq:reward-s15-a0}\\
&-(96d+16)G_{i,j}-4F_{i,j}-F_{0,j},\quad &&s=s_{i,5},\ a=a_1,
\label{eq:reward-s15-a1}\\
&(96d+16)(G_{i,j}-G_{i,j-L}),\quad &&s=\bar s_{i,4},\ a=a_0,
\label{eq:reward-bar-s14-a0}\\
&-32F_{d-1,j},\quad &&s=\bar s_{i,4},\ a=a_1,
\label{eq:reward-bar-s14-a1}\\
&16F_{d-1,j},\quad &&s=s_d,\ a=a_0,
\label{eq:reward-sd-a0}\\
&0,\quad &&\text{otherwise}.\notag
\end{empheq}
\end{subequations}
\item \emph{Paths starting in \(\mathcal S_2\).}
For \(s\in\mathcal S_2\), set
\[
r(s^\circ_{a,j},a_0)=
\begin{cases}
16F_{d-1,j},&s=s_{-1}^{\mathrm c},\ a=a_0,\\
0,&\text{otherwise}.
\end{cases}
\label{eq:rewards-s2}
\]
\item \emph{Paths starting in \(\mathcal S_3\).}
Set
\begin{subequations}
\label{eq:rewards-s3}
\noeqref{eq:reward-sr0-a0,eq:reward-sr0-a1,eq:reward-sr12-a1}
\begin{empheq}[left={r(s^\circ_{a,j},a_0)=\empheqlbrace}]{alignat=2}
&4F_{i,j},\quad &&s=s_{i,0}^{\mathrm r},\ a=a_0,
\label{eq:reward-sr0-a0}\\
&-32F_{d-1,j}-4F_{i,j}+4F_{0,j}-4F_{0,j-L},\quad &&s=s_{i,0}^{\mathrm r},\ a=a_1,
\label{eq:reward-sr0-a1}\\
&-32F_{d-1,j}-4F_{i,j},\quad &&s\in\{s_{i,1}^{\mathrm r},s_{i,2}^{\mathrm r}\},\ a=a_1,
\label{eq:reward-sr12-a1}\\
&0,\quad &&\text{otherwise}.\notag
\end{empheq}
\end{subequations}
\end{enumerate}
Let \(r_{\max}=\max_{s\in\mathcal S,\,a\in \mathcal{A}_s}|r(s,a)|\)
be the maximum absolute unshifted immediate reward.
Add \(r_{\max}\) to every reward, so the final rewards lie in \([0,2r_{\max}]\).
This adds \(r_{\max}/(1-\gamma)\) to every \(V^\pi(s)\) and \(Q^\pi(s,a)\),
leaving all comparisons and optimality gaps unchanged. The proof uses the
unshifted rewards.

\paragraph{Discount and initial policy.}
We choose the discount factor \(\gamma\) and the initial policy
\(\pi_0\) as follows:
\begin{equation}
\gamma=1-\frac{1}{4^N r_{\max}},\qquad
\pi_0(s)=
\begin{cases}
a_1,&s\in\{s_d\}\cup\{s_{i,j}:0\le i<d,\ 1\le j\le3\},\\
a_0,&\text{otherwise}.
\end{cases}
\label{proof:32}
\end{equation}

\subsection{Binary encoding and policy values}
\label{sec:counter-interpretation}

A policy \(\pi\) encodes the same vector \(\mathbf b\in\{0,1\}^d\)
in the first group \(\{s_i\}_{i=0}^{d-1}\) and the second group
\(\{\bar s_i\}_{i=0}^{d-1}\) if
\[
\pi(s_i)=\pi(\bar s_i)=a_{[\mathbf b]_i},\qquad \forall\,0\le i<d.
\]
The states \(s_i\) and \(\bar s_i\) represent the $i$-th bit,
with \(i=0\) denoting the least significant bit.
Write \(\mathbf0=(0,\ldots,0)\) and \(\mathbf1=(1,\ldots,1)\).
For \(\mathbf b\ne\mathbf1\), let \(\mathbf b^+\) denote the result
of adding one in binary. For example, when \(d=5\),
\(\mathbf b=01011\) represents \(11\), and \(\mathbf b^+=01100\)
represents \(12\).
For each \(\mathbf b\in\{0,1\}^d\), define the policy
\(\pi[\mathbf b]\) by
\begin{equation}
\pi[\mathbf b](s)=
\begin{cases}
a_{[\mathbf b]_i},
&s\in\{s_i,\bar s_i,s_i^{\mathrm c},s_i^{\mathrm z},
\bar s_{i,1},\bar s_{i,2},\bar s_{i,3}\},\quad 0\le i<d,\\
\pi_0(s),&\text{otherwise},
\end{cases}
\label{eq:encoding-policy}
\end{equation}
where \(\pi_0\) is defined
in~\eqref{proof:32}. In particular, \(\pi[\mathbf0]=\pi_0\).
For \(0\le i<d\), \(\pi[\mathbf b]\) selects the following paths
from \(s_i\) to \(s_{i+1}\) and from \(\bar s_i\) to \(\bar s_{i+1}\):
\begin{itemize}[leftmargin=*,topsep=3pt,itemsep=2pt,parsep=0pt]
\item If \([\mathbf b]_i=0\), the policy selects \(a_0\) at \(s_i\)
and \(\bar s_i\). Each path reaches its endpoint in \(6L\) transitions.
\item If \([\mathbf b]_i=1\), the policy follows the horizontal paths
in the top panel of Figure~\ref{fig:bit-paths}.
From \(s_i\), it passes through \(s_{i,1},\ldots,s_{i,5}\) to
\(s_{i+1}\): four \(a_1\) paths followed by two \(a_0\) paths,
all of length \(L\).
From \(\bar s_i\), it passes through \(\bar s_{i,1},\ldots,\bar s_{i,4}\)
to \(\bar s_{i+1}\): four \(a_1\) paths of length \(L\), followed
by one \(a_0\) path of length \(2L\). Both paths therefore have length \(6L\).
\end{itemize}
The policy selects \(a_1\) at \(s_d\). The selected paths form a cycle
from \(s_0\) through \(s_1,\ldots,s_d\) and back to \(s_0\), and
another from \(\bar s_0\) through \(\bar s_1,\ldots,\bar s_d\)
and back to \(\bar s_0\).
Each cycle consists of \(d\) paths of length \(6L\) followed by an
\(L\)-transition path back to its starting state, giving total length
\((6d+1)L\).

\begin{lemma}[Value ordering]
\label{lem:encoding-values}
There exists \(\gamma_0\in(0,1)\) such that, for all
\(\gamma\in(\gamma_0,1)\) and
\(\mathbf b\in\{0,1\}^d\setminus\{\mathbf1\}\),
\[
V^{\pi[\mathbf b^+]}(s_0)>V^{\pi[\mathbf b]}(s_0),
\qquad
V^{\pi[\mathbf b^+]}(\bar s_0)>V^{\pi[\mathbf b]}(\bar s_0).
\]
\end{lemma}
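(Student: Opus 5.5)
The plan is to compute $V^{\pi[\mathbf b]}(s_0)$ and $V^{\pi[\mathbf b]}(\bar s_0)$ in closed form. Under $\pi[\mathbf b]$ both states lie on deterministic cycles of length $(6d+1)L$, so each value is the discounted reward collected along one period divided by $1-\gamma^{(6d+1)L}$. We then compare these closed forms in the limit $\gamma\to1^-$.

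\emph{Step 1: reward collected on the $s$-cycle.} The cycle visits $s_0,\ldots,s_d$. The segment for bit $i$ starts at time $6iL$, and the closing $a_1$ path from $s_d$ carries no reward.
\begin{itemize}
\item If $[\mathbf b]_i=0$, the $a_0$ path from $s_i$ carries zero reward.
\item If $[\mathbf b]_i=1$, the $a_1$ paths out of $s_i,s_{i,1},s_{i,2},s_{i,3}$ carry zero reward. The only rewards come from the $a_0$ paths at $s_{i,4}$ and $s_{i,5}$, entered at times $6iL+4L$ and $6iL+5L$.
\end{itemize}
Since the reward at $s^\circ_{a,j}$ is received $j$ steps after the path starts, the coefficients of \eqref{eq:reward-s14-a0} sum to
\[
\textstyle\sum_{j}\gamma^{j}(96d+16)G_{i,j}=(96d+16)\gamma\,G_i(\gamma).
\]
The $s_{i,5}$ path contributes the negative of this, delayed by a further $L$ steps. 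The net contribution of bit $i$ is therefore
\[
(96d+16)\,\gamma^{6iL+4L+1}(1-\gamma^L)\,G_i(\gamma).
\]

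\emph{Step 2: reward collected on the $\bar s$-cycle.} On the $\bar s$-cycle, the only nonzero rewards for a set bit are on the $2L$-length $a_0$ path from $\bar s_{i,4}$, with coefficients $(96d+16)(G_{i,j}-G_{i,j-L})$. Summing with discounting gives
\[
(96d+16)\,\gamma\,(1-\gamma^L)\,G_i(\gamma)
\]
at the same offset $6iL+4L$. Hence the $s$-cycle and $\bar s$-cycle give the identical expression
\[
V^{\pi[\mathbf b]}(s_0)=V^{\pi[\mathbf b]}(\bar s_0)
=\frac{(96d+16)(1-\gamma^L)F_0(\gamma)}{1-\gamma^{(6d+1)L}}
\sum_{i:[\mathbf b]_i=1}\gamma^{6iL+4L+1}\,\mathsf g_i(\gamma).
\]
This step is where care is needed. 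I would double-check, entry by entry against \eqref{eq:rewards-s1}, that every other path used by $\pi[\mathbf b]$ on the two cycles indeed has zero reward, including the $a_1$ paths at $\bar s_{i,1},\ldots,\bar s_{i,3}$ and the path from $\bar s_d$. I would also check the index shift $j-L$ for the $2L$-length path.

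\emph{Step 3: limit comparison.} For $\gamma$ close to $1$ the prefactor is positive, because $F_0(\gamma)>0$ by assumption and $0<\gamma<1$. It therefore suffices to show
\[
D_{\mathbf b}(\gamma):=\sum_{i:[\mathbf b^+]_i=1}\gamma^{c_i}\mathsf g_i(\gamma)-\sum_{i:[\mathbf b]_i=1}\gamma^{c_i}\mathsf g_i(\gamma)>0,
\qquad c_i:=6iL+4L+1 .
\]
Let $k$ be the lowest zero bit of $\mathbf b$. Then $\mathbf b^+$ sets bit $k$, clears bits $0,\ldots,k-1$, and keeps the higher bits. As $\gamma\to1^-$,
\[
D_{\mathbf b}(\gamma)\to \mathsf g_k-\sum_{i<k}\mathsf g_i .
\]
By \eqref{proof:1}, $\mathsf g_0\ge4\mathsf f_{d-1}\ge 4\cdot2^{d-1}>0$ and $\mathsf g_i\ge2\mathsf g_{i-1}$. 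Hence $\sum_{i<k}\mathsf g_i\le(1-2^{-k})\mathsf g_k$, and the limit is at least $2^{-k}\mathsf g_k\ge\mathsf g_0>0$.

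\emph{Step 4: choice of $\gamma_0$.} Each $D_{\mathbf b}(\gamma)$ is continuous near $1$ with a positive limit, and there are finitely many $\mathbf b$. So there is a $\gamma_0$ such that all the $D_{\mathbf b}$ and $F_0$ are positive on $(\gamma_0,1)$, which proves both inequalities of the lemma.
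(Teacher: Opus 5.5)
Your proposal is correct and follows the paper's proof: the same per-bit cycle contribution $(96d+16)\gamma^{6iL+4L+1}(1-\gamma^L)G_i(\gamma)$ on both cycles, the same closed form with period $(6d+1)L$, the same limit $\mathsf g_k-\sum_{i<k}\mathsf g_i\ge\mathsf g_0>0$, and the same common threshold over the finitely many $\mathbf b$. One thing you should state explicitly is that your closed form is for the unshifted rewards; the uniform shift adds $r_{\max}/(1-\gamma)$ to both values, so it does not affect the comparison.
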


\begin{proof}
We use the unshifted rewards, which give the same value differences.
Fix any \(\mathbf b\in\{0,1\}^d\). For \(0\le i<d\), consider
the paths from \(s_i\) to \(s_{i+1}\) and from \(\bar s_i\) to
\(\bar s_{i+1}\) selected by \(\pi[\mathbf b]\).
If \([\mathbf b]_i=0\), the policy selects \(a_0\) at \(s_i\)
and \(\bar s_i\), and every reward along both paths is zero by the
last case of~\eqref{eq:rewards-s1}.
If \([\mathbf b]_i=1\), both paths have zero rewards for the
first \(4L\) transitions.
By~\eqref{eq:reward-s14-a0},~\eqref{eq:reward-s15-a0},
and~\eqref{eq:reward-bar-s14-a0}, the remaining \(2L\) transitions on each path carry
the following two \(L\)-term reward sequences, in order:
\[
\bigl(0,\ (96d+16)G_{i,1},\ \ldots,\ (96d+16)G_{i,L-1}\bigr),
\qquad
\bigl(0,\ -(96d+16)G_{i,1},\ \ldots,\ -(96d+16)G_{i,L-1}\bigr).
\]
On each \(6L\)-transition path, the second sequence starts \(L\)
transitions after the first and contributes \(-\gamma^L\) times the
discounted reward of the first. For the total discounted reward along
each path, we therefore obtain
\[
(96d+16)\gamma^{4L}(1-\gamma^L)
\sum_{j=1}^{L-1}\gamma^jG_{i,j}
=(96d+16)\gamma^{4L+1}(1-\gamma^L)G_i(\gamma).
\label{proof:31}
\]
Each bit \(i\) with \([\mathbf b]_i=1\) contributes \(\gamma^{6iL}\) times~\eqref{proof:31}
to the discounted reward over one cycle starting from \(s_0\) or
\(\bar s_0\). The paths selected at \(s_d\) and \(\bar s_d\) have
zero rewards. Summing over \(i\) with \([\mathbf b]_i=1\) and repeating the cycle
every \((6d+1)L\) transitions, we obtain
\[
V^{\pi[\mathbf b]}(s_0)=V^{\pi[\mathbf b]}(\bar s_0)
=\frac{(96d+16)\gamma^{4L+1}(1-\gamma^L)}
{1-\gamma^{(6d+1)L}}
\sum_{i=0}^{d-1}\gamma^{6iL}G_i(\gamma)[\mathbf b]_i.
\label{eq:encoded-cycle-value}
\]
For \(\mathbf b\ne\mathbf1\), we compare the cycle values
in~\eqref{eq:encoded-cycle-value} for \(\mathbf b\) and \(\mathbf b^+\).
Let \(k=\min\{i:[\mathbf b]_i=0\}\) be the index of the least
significant zero bit of \(\mathbf b\).
Adding one changes bit \(k\) from zero to one and every lower bit
from one to zero, leaving higher bits unchanged.
For \(s\in\{s_0,\bar s_0\}\), we combine the polynomial limits
in~\eqref{proof:29} with
\((1-\gamma^L)/(1-\gamma^{(6d+1)L})\to1/(6d+1)\) and
the condition \(\mathsf g_i\ge2\mathsf g_{i-1}\) in~\eqref{proof:1}
to obtain
\[
\begin{aligned}
\lim_{\gamma\to1^{-}}
\frac{V^{\pi[\mathbf b^+]}(s)-V^{\pi[\mathbf b]}(s)}
{4\gamma F_0(\gamma)}
&=4\sum_{i=0}^{d-1}\mathsf g_i
\bigl([\mathbf b^+]_i-[\mathbf b]_i\bigr)\\
&=4\Bigl(\mathsf g_k-\sum_{i=0}^{k-1}\mathsf g_i\Bigr)
\ge4\Bigl(\mathsf g_k-\sum_{i=0}^{k-1}(\mathsf g_{i+1}-\mathsf g_i)\Bigr)
=4\mathsf g_0>0.
\end{aligned}
\label{eq:increment-limit}
\]
We conclude that the two value inequalities in Lemma~\ref{lem:encoding-values}
hold for \(\gamma<1\) sufficiently close to one.
Since there are finitely many encodings, a common threshold
\(\gamma_0\in(0,1)\) suffices for all \(\mathbf b\ne\mathbf1\).
\end{proof}

By Lemma~\ref{lem:encoding-values}, for
\(\gamma<1\) sufficiently close to one, the \(2^d\) policies
\(\pi[\mathbf b]\), listed in binary order, have strictly increasing
values at \(s_0\) and \(\bar s_0\).
In Section~\ref{sec:analysis}, we show that policy iteration, starting
from \(\pi[\mathbf0]\), performs the first \(2^d-2\) increments in
this order, using five iterations per increment.
This yields \(\Omega(2^d)\) iterations.
Section~\ref{sec:families} derives the corresponding lower bounds
in the number of states.

\section{Binary counting under policy iteration}
\label{sec:analysis}
\newcounter{updatestep}[subsection]
\renewcommand{\theupdatestep}{\arabic{subsection}.\arabic{updatestep}}

\begin{lemma}[Binary counting]
\label{prop:counter}
For the construction in Section~\ref{sec:construction}, with
\(\gamma,\pi_0\) in~\eqref{proof:32}, Howard's policy iteration
in~\eqref{eq:policy-update} produces \(\pi[\mathbf b]\) after
\(5\sum_{i=0}^{d-1}2^i[\mathbf b]_i\) iterations
for every \(\mathbf b\in\{0,1\}^d\setminus\{\mathbf1\}\).
\end{lemma}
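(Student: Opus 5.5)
We argue by induction along the binary order. The base case \(\mathbf b=\mathbf0\) is immediate, since \(\pi[\mathbf0]=\pi_0\) after zero iterations. The inductive step is the claim that, for every \(\mathbf b\notin\{\mathbf1,\mathbf1-e_0\}\) (so that \(\mathbf b^+\ne\mathbf1\)), starting from \(\pi[\mathbf b]\) Howard's update~\eqref{eq:policy-update} produces exactly \(\pi[\mathbf b^+]\) after five iterations. We will identify five explicit intermediate policies \(\pi[\mathbf b]=\sigma_0,\sigma_1,\ldots,\sigma_5=\pi[\mathbf b^+]\) and verify each transition. Since \(5\sum_i2^i[\mathbf b^+]_i=5\sum_i2^i[\mathbf b]_i+5\), the iteration count follows.

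Write \(k\) for the least significant zero bit of \(\mathbf b\). The intended dynamics are as follows.
\begin{enumerate}
\item At \(\sigma_0=\pi[\mathbf b]\), the only improving switches are at \(s_{k,4}\) and \(s_{k,5}\), or at analogous gadget states. They are triggered through the reset states \(\mathcal S_3\) and the carry and zero states \(s_i^{\mathrm c},s_i^{\mathrm z}\), which propagate information about bits below \(k\).
\item Next, \(s_k\) switches to \(a_1\), and the states \(s_i\) with \(i<k\) switch to \(a_0\) via the reset paths from \(s^{\mathrm r}_{i,\cdot}\).
\item The bar group \(\bar s_i,\bar s_{i,1},\ldots,\bar s_{i,3}\) then copies the new bits, using the \(a_1\) path from \(\bar s_{i,4}\) back to \(s_i\).
\item Finally, the auxiliary states \(s^{\mathrm c}_i,s^{\mathrm z}_i,s^{\mathrm r}_{i,\cdot}\) are restored to \(\pi_0\).
\end{enumerate}
The first task is to read the exact form of each \(\sigma_j\) off the reward design, checking which combinations cancel. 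For example, the \(-4F_{0,j-L}\) term in~\eqref{eq:reward-sr0-a1} and the \(G_{i,j-L}\) term in~\eqref{eq:reward-bar-s14-a0} are shifted copies that make certain \(Q\)-differences exact multiples of \((1-\gamma^L)\).

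For each \(\sigma_j\), we compute \(V^{\sigma_j}\) in closed form. The structure is deterministic: every state lies on a path into one of a few cycles (the \(s\)-cycle, the \(\bar s\)-cycle, or mixtures passing through \(s_{-1}^{\mathrm c},s_{-1}^{\mathrm z}\) to \(\bar s_0\)), so values are sums of \(\gamma^{mL}\)-weighted polynomials \(F_i(\gamma),G_i(\gamma)\) as in the proof of Lemma~\ref{lem:encoding-values}. We then evaluate every gain \(Q^{\sigma_j}(s,a)-V^{\sigma_j}(s)\) at the two-action states. Each gain has the form \(\gamma^{c}(1-\gamma)^{e}F_0(\gamma)\,(\text{limit}+o(1))\). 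We show that its sign is determined by the leading limit, which is a combination of \(\mathsf f_i,\mathsf g_i\), and we use~\eqref{proof:1} to fix that sign: \(\mathsf g_0\ge4\mathsf f_{d-1}\) makes \(G\)-terms dominate \(F\)-terms, and the doubling conditions make a single bit dominate all lower bits. The large factor \(96d+16\) absorbs the \(O(d)\) terms that come from summing over bits. States with zero gain keep their action by the tie rule, which matters for states whose action is irrelevant in a given phase.

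\textbf{Main obstacle.} The two sticking points are the uniformity in \(\gamma\) and the exactness of the tie-breaking. Lemma~\ref{lem:encoding-values} only gives a threshold \(\gamma_0\), whereas here \(\gamma=1-1/(4^Nr_{\max})\) is explicit. We therefore need a quantitative bound: each relevant gain, divided by its leading power of \((1-\gamma)\), should differ from its limit by at most \(O(L\cdot N\cdot r_{\max}(1-\gamma))\) times coefficient sizes. We plan to obtain this by expanding \(\gamma^{m}=1-m(1-\gamma)+\cdots\) and bounding polynomial coefficients by \(r_{\max}\), using the fact that the limit is a nonzero rational combination bounded below in absolute value.

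The most delicate step is ruling out spurious improving switches: we must show that every gain predicted to be nonpositive is genuinely \(\le0\), not merely small. We will handle this by exhibiting exact cancellations, making those gains identically \(0\) as rational functions wherever possible. In particular, the bar group and the \(s\) group have identical values on \(\pi[\mathbf b]\), by~\eqref{eq:encoded-cycle-value}. We expect this case analysis, over roughly a dozen state types times five phases, to make up the bulk of the proof.
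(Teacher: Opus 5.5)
There is a genuine gap. Your proposal defers exactly what the lemma requires: the five intermediate policies and the sign of every action comparison at every two-action state. The stages you do sketch are not what Howard's rule does on this construction, so the transitions you would set out to verify are false.

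\textbf{Iteration one.} At \(\pi[\mathbf b]\) the improving switches are neither confined to nor located at \(s_{k,4},s_{k,5}\). After dividing by \(4\gamma F_0(\gamma)\), their gains for \(a_1\) tend to \(2\mathsf f_k-4\mathsf f_{d-1}-\tfrac12<0\) and \(-\tfrac14\), so both keep \(a_0\); bit \(k\) is the one that becomes \(1\), so its branch must stay on the main path. Meanwhile many other states switch at once:
\begin{itemize}
\item \(s_{i,1},s_{i,2},s_{i,3}\) already play \(a_1\) under \(\pi_0\), so every \(s_i\) with \([\mathbf b]_i=0\) has gain \(\gamma^{4L}\mathsf R_i(\gamma)>0\) for \(a_1\);
\item \(s_d\) strictly prefers \(a_0\), because of the \(16F_{d-1}\) reward and \(V^{\pi[\mathbf b]}(s_0)=V^{\pi[\mathbf b]}(\bar s_0)\);
\item \(s_{i,4}\) switches for \(i<k\), and \(s_{i,5}\) for \(i>k\) with \([\mathbf b]_i=0\);
\item \(s_i^{\mathrm c}\) becomes \(a_1\) exactly for \(i\le k\), and every \(s_i^{\mathrm z}\) becomes \(a_0\);
\item \(\bar s_{i,3}\) switches at zero bits.
\end{itemize}
Howard's rule performs all of these simultaneously, so a one-gadget-at-a-time schedule cannot occur.

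\textbf{Iteration two.} Your second stage also has the wrong mechanism. The \(\mathcal S_3\) states keep \(a_0\) until the fourth iteration. Their \(a_1\) paths carry a limiting penalty of \(8\mathsf f_{d-1}+\mathsf f_i\) relative to the old cycle, and pay off only once the first group's cycle is worth at least \(16\mathsf f_{d-1}\) more. The zero positions of \(\mathbf b^+\) are reset in the second iteration for a different reason: the \(\mathcal S_2\) branches chosen in the first iteration lead into the old \(\bar s\)-cycle, and are worth less than continuing to \(s_{i+1}\).

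\textbf{Missing idea.} What is missing is how the two groups and \(s_d\) cooperate.
\begin{itemize}
\item Through the first three iterations the second group holds the old cycle for \(\mathbf b\) fixed as a reference. The delay chain \(\bar s_{i,3},\bar s_{i,2},\bar s_{i,1}\) protects it: these states switch one per iteration and rely on exact ties to keep zero bits at \(a_0\). The chains \(s_{i,3},s_{i,2},s_{i,1}\) later do the same for the first group.
\item The first group exits into that reference through \(a_0\) at \(s_d\) (iteration 1) and re-encodes \(\mathbf b^+\) (iteration 2).
\item It closes its own cycle by switching \(s_d\) back to \(a_1\) (iteration 3). This uses \(\mathsf g_0\ge4\mathsf f_{d-1}\) to beat the exit reward.
\item Only then do \(\bar s_{i,4}\) and \(\mathcal S_3\) switch to \(a_1\) (iteration 4).
\item The comparisons at \(\bar s_i,s_i^{\mathrm c},s_i^{\mathrm z}\) then reduce to the gain at \(s_i\) up to a bias of order \(1-\gamma^L\), which copies \(\mathbf b^+\) (iteration 5).
\end{itemize}
Your stages never mention \(s_d\) or the delay chains, so they cannot be assembled into five Howard steps.

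\textbf{Fixed discount.} Your treatment of this is close in spirit to the paper's. The lower bound, however, should come from integrality rather than from the normalized limits in \(\mathsf f_i,\mathsf g_i\). Write each gain as \(J(\gamma)/(1-\gamma^{(6d+1)L})\) with \(J\in\mathbb Z[x]\), \(\deg J\le N\), and coefficients of magnitude at most \(4r_{\max}\). Expanding \(J\) in powers of \(1-\gamma\), the lowest nonzero coefficient is an integer of magnitude at least \(1\). The higher-order terms, relative to it, are below \((1-\gamma)2^{N+3}r_{\max}=2^{3-N}<1\).
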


\begingroup

\looseness=-1 \noindent\textbf{Proof overview.}
Starting from \(\pi[\mathbf b]\), where \(\mathbf b\) represents an integer
in \(0,\ldots,2^d-3\), five iterations produce \(\pi[\mathbf b^+]\).
Figure~\ref{fig:five-updates} tracks the selected paths and bit choices:
the first group encodes \(\mathbf b^+\) after Iteration~2 but forms its own
cycle after Iteration~3.
Each update is simultaneous and uses action values under the preceding policy.

\begin{enumerate}[label=\textbf{Iteration \arabic*.},wide=0pt,
 labelsep=.6em,itemsep=5pt,topsep=5pt,parsep=0pt]
\item
Every \(s_i\) selects \(a_1\), while the second group retains
\(\mathbf b\) and its cycle.
The comparisons under \(\pi[\mathbf b]\) at \(s_{i,4},s_{i,5}\)
select branches through \(\mathcal S_2\) at exactly the zero positions
of \(\mathbf b^+\). These branches determine which \(s_i\) will select
\(a_0\) in Iteration~2.
As Figure~\ref{fig:five-updates}(b) shows, the path from \(s_0\) now
enters the old cycle through \(\mathcal S_2\), and \(s_d\) selects
\(a_0\) toward \(\bar s_0\).
The first group therefore has no cycle of its own.

\item
The states \(s_i\) select \(a_0\) at the positions chosen in Iteration~1,
bypassing the branches through \(\mathcal S_2\), and retain \(a_1\) elsewhere.
The first group therefore encodes \(\mathbf b^+\).
The selected path from \(s_0\) reaches \(s_d\) and then enters the old
cycle through \(\bar s_0\), traversing the new encoding only once
(Figure~\ref{fig:five-updates}(c)).

\item
The state \(s_d\) selects \(a_1\), returning to \(s_0\).
The first-group path now repeats, forming a cycle for \(\mathbf b^+\),
while the second group retains its cycle for \(\mathbf b\)
(Figure~\ref{fig:five-updates}(d)).
The new cycle's value advantage over the old cycle outweighs the reward
losses along the connecting paths, making them preferable in Iteration~4.

\item
The states \(\bar s_{i,4}\) and those in \(\mathcal S_3\) select
\(a_1\), establishing the connections into the first group.
All \(\bar s_i\) also select \(a_1\), but their paths enter the
first-group cycle through \(\bar s_{i,4}\), as in
Figure~\ref{fig:five-updates}(e).

\item
Through these connections, the action comparisons at
\(\bar s_i,s_i^{\mathrm c},s_i^{\mathrm z}\) reduce to the comparison
at \(s_i\), with a small bias toward \(a_0\).
When \([\mathbf b^+]_i=0\), the actions at \(s_i\) tie, so these states
select \(a_0\).
When \([\mathbf b^+]_i=1\), the extra reward from \(a_1\) outweighs
the bias, so they select \(a_1\).
The states \(\bar s_{i,4}\) and those in \(\mathcal S_3\) return to
\(a_0\). Together with the updates at \(\{s_{i,j}\}_{j=1}^3\) and
\(\{\bar s_{i,j}\}_{j=1}^3\), these changes give the full policy
\(\pi[\mathbf b^+]\).
Both groups again have their own cycles, now encoding \(\mathbf b^+\)
(Figure~\ref{fig:five-updates}(f)).
\end{enumerate}
\looseness=-1 The states \(\{s_{i,j}\}_{j=1}^3\) and
\(\{\bar s_{i,j}\}_{j=1}^3\) delay changes at \(s_i\) and
\(\bar s_i\), respectively.
Within each set of three states, switches from \(a_0\) to \(a_1\)
proceed in the order \(j=3,2,1\), one per iteration, while ties under
the preceding policy keep the corresponding bit at zero.
This preserves the old second-group zeros through Iteration~3
and the new first-group zeros through Iteration~5.
Table~\ref{tab:auxiliary-updates} records all action choices,
which are verified in the proof below.

\begin{figure}[H]
\centering
\begin{tikzpicture}[x=1cm,y=1cm,font=\small,text=black,
 state/.style={circle,draw=paperSlate,fill=white,minimum size=8.5mm,
   inner sep=1pt,line width=.6pt},
 path/.style={-{Latex[length=1.7mm,width=1.2mm]},line width=.72pt,
   line cap=round,line join=round,draw=paperSlate,
   shorten <=.35mm,shorten >=.35mm},
 old/.style={path,draw=paperBronze},new/.style={path,draw=paperTeal},
 return/.style={rounded corners=2mm},
 lab/.style={font=\footnotesize,inner sep=1pt},
 title/.style={anchor=west,font=\small\bfseries,inner sep=0pt}]
\begin{scope}[xshift=0.00cm,yshift=-0.00cm]
\node[title] at (0,1.55) {(a) Before Iteration~1};
\node[state] (a0) at (0.55,0) {$s_0$};
\node[state] (ad) at (4.15,0) {$s_d$};
\node[state] (ab0) at (0.55,-2.25) {$\bar s_0$};
\node[state] (abd) at (4.15,-2.25) {$\bar s_d$};
\node[lab] at (2.35,.39) {Bits at $s_i$: $\mathbf b$};
\node[lab] at (2.35,-2.64) {Bits at $\bar s_i$: $\mathbf b$};
\draw[old] (a0)--(ad);
\draw[old,return] (ad.north)--(4.15,.90)--(.55,.90)--(a0.north);
\node[lab] at (2.35,1.10) {$a_1$};
\draw[old] (ab0)--(abd);
\draw[old,return] (abd.south)--(4.15,-3.15)--(.55,-3.15)--(ab0.south);
\end{scope}

\begin{scope}[xshift=5.18cm,yshift=-0.00cm]
\node[title] at (0,1.55) {(b) After Iteration~1};
\node[state] (b0) at (0.55,0) {$s_0$};
\node[state] (bd) at (4.15,0) {$s_d$};
\node[state] (bb0) at (0.55,-2.25) {$\bar s_0$};
\node[state] (bbd) at (4.15,-2.25) {$\bar s_d$};
\node[lab] at (2.35,.39) {Bits at $s_i$: $\mathbf1$};
\node[lab] at (2.35,-2.64) {Bits at $\bar s_i$: $\mathbf b$};
\draw[old] (bd)--node[lab,pos=.47,above=5pt] {$a_0$} (bb0);
\draw[old] (b0.south)--(bb0.north);
\node[lab,anchor=west] at (.73,-1.04) {via $\mathcal S_2$};
\draw[old] (bb0)--(bbd);
\draw[old,return] (bbd.south)--(4.15,-3.15)--(.55,-3.15)--(bb0.south);
\end{scope}

\begin{scope}[xshift=10.36cm,yshift=-0.00cm]
\node[title] at (0,1.55) {(c) After Iteration~2};
\node[state] (c0) at (0.55,0) {$s_0$};
\node[state] (cd) at (4.15,0) {$s_d$};
\node[state] (cb0) at (0.55,-2.25) {$\bar s_0$};
\node[state] (cbd) at (4.15,-2.25) {$\bar s_d$};
\node[lab] at (2.35,.39) {Bits at $s_i$: $\mathbf b^+$};
\node[lab] at (2.35,-2.64) {Bits at $\bar s_i$: $\mathbf b$};
\draw[path] (c0)--(cd);
\draw[old] (cd)--node[lab,pos=.47,above=5pt] {$a_0$} (cb0);
\draw[old] (cb0)--(cbd);
\draw[old,return] (cbd.south)--(4.15,-3.15)--(.55,-3.15)--(cb0.south);
\end{scope}

\begin{scope}[xshift=0.00cm,yshift=-5.25cm]
\node[title] at (0,1.55) {(d) After Iteration~3};
\node[state,draw=paperTeal] (d0) at (0.55,0) {$s_0$};
\node[state,draw=paperTeal] (dd) at (4.15,0) {$s_d$};
\node[state] (db0) at (0.55,-2.25) {$\bar s_0$};
\node[state] (dbd) at (4.15,-2.25) {$\bar s_d$};
\node[lab] at (2.35,.39) {Bits at $s_i$: $\mathbf b^+$};
\node[lab] at (2.35,-2.64) {Bits at $\bar s_i$: $\mathbf b$};
\draw[new] (d0)--(dd);
\draw[new,return] (dd.north)--(4.15,.90)--(.55,.90)--(d0.north);
\node[lab] at (2.35,1.10) {$a_1$};
\draw[old] (db0)--(dbd);
\draw[old,return] (dbd.south)--(4.15,-3.15)--(.55,-3.15)--(db0.south);
\end{scope}

\begin{scope}[xshift=5.18cm,yshift=-5.25cm]
\node[title] at (0,1.55) {(e) After Iteration~4};
\node[state,draw=paperTeal] (e0) at (0.55,0) {$s_0$};
\node[state,draw=paperTeal] (ed) at (4.15,0) {$s_d$};
\node[state] (eb0) at (0.55,-2.25) {$\bar s_0$};
\node[state] (ebd) at (4.15,-2.25) {$\bar s_{0,4}$};
\node[lab] at (2.35,.39) {Bits at $s_i$: $\mathbf b^+$};
\node[lab] at (2.35,-2.64) {Bits at $\bar s_i$: $\mathbf1$};
\draw[new] (e0)--(ed);
\draw[new,return] (ed.north)--(4.15,.90)--(.55,.90)--(e0.north);
\node[lab] at (2.35,1.10) {$a_1$};
\draw[new] (eb0)--(ebd);
\node[lab] at (2.35,-1.98) {$a_1$ paths};
\draw[new] (ebd)--node[lab,pos=.47,above=5pt] {$a_1$} (e0);
\end{scope}

\begin{scope}[xshift=10.36cm,yshift=-5.25cm]
\node[title] at (0,1.55) {(f) After Iteration~5};
\node[state,draw=paperTeal] (f0) at (0.55,0) {$s_0$};
\node[state,draw=paperTeal] (fd) at (4.15,0) {$s_d$};
\node[state,draw=paperTeal] (fb0) at (0.55,-2.25) {$\bar s_0$};
\node[state,draw=paperTeal] (fbd) at (4.15,-2.25) {$\bar s_d$};
\node[lab] at (2.35,.39) {Bits at $s_i$: $\mathbf b^+$};
\node[lab] at (2.35,-2.64) {Bits at $\bar s_i$: $\mathbf b^+$};
\draw[new] (f0)--(fd);
\draw[new,return] (fd.north)--(4.15,.90)--(.55,.90)--(f0.north);
\node[lab] at (2.35,1.10) {$a_1$};
\draw[new] (fb0)--(fbd);
\draw[new,return] (fbd.south)--(4.15,-3.15)--(.55,-3.15)--(fb0.south);
\end{scope}

\end{tikzpicture}
\caption{Selected paths and bit choices during the five iterations from
\(\pi[\mathbf b]\) to \(\pi[\mathbf b^+]\).
Circles denote states, and arrows represent paths selected by the current policy.
Within each panel, the upper and lower rows show the first and second groups, respectively.}
\label{fig:five-updates}
\end{figure}

\endgroup

\begin{proof}[Proof of Lemma~\ref{prop:counter}]
Suppose \(\pi_t=\pi[\mathbf b]\), where \(\mathbf b\) encodes
an integer in \(0,\ldots,2^d-3\).
Let
\begin{equation}
k=\min\{i:[\mathbf b]_i=0\}.
\label{eq:first-zero-bit}
\end{equation}
We show that the next five iterations produce \(\pi_{t+5}=\pi[\mathbf b^+]\).
In Sections~\ref{sec:update-one}--\ref{sec:update-five}, we determine
the actions selected at each iteration by comparing action values
as \(\gamma\to1^-\).
Section~\ref{sec:complete-counter-proof} verifies these comparisons
for the discount in~\eqref{proof:32}.
Table~\ref{tab:auxiliary-updates} records the choices at all states with
two actions throughout the increment.

\begin{table}[t]
\centering
\begingroup
\color{black}
\setlength{\tabcolsep}{5pt}
\renewcommand{\arraystretch}{1.18}
\arrayrulecolor{paperInk}
\begin{tabular*}{\linewidth}{@{\hspace{5pt}}>{\columncolor{paperBand}[5pt][5pt]}l
  @{\hspace{12pt}\extracolsep{\fill}}*{6}{c}@{}}
\toprule
\multicolumn{1}{@{\hspace{5pt}}l}{State} &
\(\pi_t\) & \(\pi_{t+1}\) &
\(\pi_{t+2}\) & \(\pi_{t+3}\) &
\(\pi_{t+4}\) & \(\pi_{t+5}\)\\
\midrule
\(s_i\) & \([\mathbf b]_i\) & 1 & \([\mathbf b^+]_i\) & \([\mathbf b^+]_i\) & \([\mathbf b^+]_i\) & \([\mathbf b^+]_i\)\\
\(s_{i,1}\) & 1 & 1 & \([\mathbf b^+]_i\) & \([\mathbf b^+]_i\) & \([\mathbf b^+]_i\) & 1\\
\(s_{i,2}\) & 1 & 1 & \([\mathbf b^+]_i\) & \([\mathbf b^+]_i\) & 1 & 1\\
\(s_{i,3}\) & 1 & 1 & \([\mathbf b^+]_i\) & 1 & 1 & 1\\
\(s_{i,4}\) & 0 & \(\mathds{1}\{i<k\}\) & 0 & 0 & 0 & 0\\
\(s_{i,5}\) & 0 & \(\mathds{1}\{i>k,\,[\mathbf b]_i=0\}\) & 0 & 0 & 0 & 0\\
\(s_d\) & 1 & 0 & 0 & 1 & 1 & 1\\
\addlinespace[2.5pt]
\(\bar s_i\) & \([\mathbf b]_i\) & \([\mathbf b]_i\) & \([\mathbf b]_i\) & \([\mathbf b]_i\) & 1 & \([\mathbf b^+]_i\)\\
\(\bar s_{i,1}\) & \([\mathbf b]_i\) & \([\mathbf b]_i\) & \([\mathbf b]_i\) & 1 & 1 & \([\mathbf b^+]_i\)\\
\(\bar s_{i,2}\) & \([\mathbf b]_i\) & \([\mathbf b]_i\) & 1 & 1 & 1 & \([\mathbf b^+]_i\)\\
\(\bar s_{i,3}\) & \([\mathbf b]_i\) & 1 & 1 & 1 & 1 & \([\mathbf b^+]_i\)\\
\(\bar s_{i,4}\) & 0 & 0 & 0 & 0 & 1 & 0\\
\addlinespace[2.5pt]
\(s_i^{\mathrm c}\) & \([\mathbf b]_i\) & \(\mathds{1}\{i\le k\}\) & \(\mathds{1}\{i\le k+1\}\) &
\(\mathds{1}\{i\le k+2\}\) & \(\mathds{1}\{i\le k+3\}\) & \([\mathbf b^+]_i\)\\
\(s_i^{\mathrm z}\) & \([\mathbf b]_i\) & 0 & 0 & 0 & 0 & \([\mathbf b^+]_i\)\\
\(s_{i,j}^{\mathrm r}\) & 0 & 0 & 0 & 0 & 1 & 0\\
\bottomrule
\end{tabular*}
\endgroup
\caption{Policy choices at all states with two actions during one binary increment.
Entries \(0,1\) denote \(a_0,a_1\).
Here \(0\le i<d\), \(0\le j\le2\), \(k=\min\{i:[\mathbf b]_i=0\}\),
and \(\mathds{1}\{\cdot\}\) denotes the indicator.
All remaining states have only action \(a_0\).}
\label{tab:auxiliary-updates}
\end{table}

We divide the unshifted rewards by \(4\gamma F_0(\gamma)\), which is positive
for \(\gamma<1\) sufficiently close to one.
For a path starting with action \(a\) at \(s\) and then following \(\pi\),
suppose its first visit to \(s'\) at a positive time
occurs after \(m\) transitions.
When \(s'=s\), this is the first return.
Let \(r_0,\ldots,r_{m-1}\) be the unshifted rewards before this visit and define
\begin{equation}
\widetilde R^\pi((s,a)\leadsto s')
:=\frac{\sum_{\ell=0}^{m-1}\gamma^\ell r_\ell}{4\gamma F_0(\gamma)},
\qquad
\widetilde R^\pi(s\leadsto s')
=\widetilde R^\pi((s,\pi(s))\leadsto s').
\label{eq:path-reward}
\end{equation}
\noeqref{eq:path-reward}
For \(0\le i<d\), the selected \(6L\)-transition paths under \(\pi[\mathbf b]\)
have scaled rewards given by~\eqref{eq:rewards-s1} and~\eqref{proof:31}:
\begin{equation}
\widetilde R^{\pi[\mathbf b]}(s_i\leadsto s_{i+1})
=\widetilde R^{\pi[\mathbf b]}(\bar s_i\leadsto\bar s_{i+1})
=[\mathbf b]_i\gamma^{4L}\mathsf R_i(\gamma),\qquad
\mathsf R_i(\gamma):=(24d+4)\mathsf g_i(\gamma)(1-\gamma^L).
\label{eq:scaled-rewards}
\end{equation}
Here \(\mathsf R_i(\gamma)\) is the scaled reward along the
\(2L\)-transition \(a_0\) path from \(\bar s_{i,4}\) to \(\bar s_{i+1}\).
For any policy \(\pi\), define the scaled values
\begin{equation}
\widetilde V^\pi(s):=\frac{V^\pi(s)}{4\gamma F_0(\gamma)},\qquad
\widetilde Q^\pi(s,a):=\frac{Q^\pi(s,a)}{4\gamma F_0(\gamma)},\qquad
\widetilde Q^\pi(s,a)=\widetilde R^\pi((s,a)\leadsto s')+\gamma^m\widetilde V^\pi(s'),
\label{eq:path-decomposition}
\end{equation}
where the decomposition in~\eqref{eq:path-decomposition}
follows by applying~\eqref{eq:policy-value}
along the path in~\eqref{eq:path-reward}.
Using~\eqref{eq:scaled-rewards} and~\eqref{eq:path-decomposition}, we obtain
\begin{equation}
\widetilde V^{\pi[\mathbf b]}(s_i)=\gamma^{6L}\widetilde V^{\pi[\mathbf b]}(s_{i+1})+[\mathbf b]_i\gamma^{4L}\mathsf R_i(\gamma),
\quad \forall\,0\le i<d,\qquad
\widetilde V^{\pi[\mathbf b]}(s_d)=\gamma^L\widetilde V^{\pi[\mathbf b]}(s_0).
\label{proof:6}
\end{equation}
We also obtain~\eqref{proof:6} with each \(s_i\) replaced by
\(\bar s_i\), since the corresponding selected paths have the same
lengths and reward sequences (see Section~\ref{sec:counter-interpretation}).
We obtain the common limit at \(s_0\) and \(\bar s_0\) by dividing
\eqref{eq:encoded-cycle-value} by \(4\gamma F_0(\gamma)\) and taking
\(\gamma\to1^-\) as in the derivation of~\eqref{eq:increment-limit}.
Since \(\gamma^L\to1\) and \(\mathsf R_i(\gamma)\to0\),
\eqref{proof:6} gives this limit first at \(s_d,\bar s_d\) and then
at \(s_i,\bar s_i\) for \(i=d-1,\ldots,1\). Hence, we have
\begin{equation}
\lim_{\gamma\to1^{-}}\widetilde V^{\pi[\mathbf b]}(s_i)
=4\sum_{j=0}^{d-1}\mathsf g_j[\mathbf b]_j
=\lim_{\gamma\to1^{-}}\widetilde V^{\pi[\mathbf b]}(\bar s_i),
\qquad \forall\,0\le i\le d.
\label{eq:encoding-value-limit}
\end{equation}
We express the policy iteration rule at states with two actions in terms of
\begin{equation}
\Delta^\pi(s):=\widetilde Q^\pi(s,a_1)-\widetilde Q^\pi(s,a_0).
\label{eq:action-difference}
\end{equation}
By~\eqref{eq:policy-update}, the next policy selects \(a_1\) if
\(\Delta^\pi(s)>0\), selects \(a_0\) if \(\Delta^\pi(s)<0\),
and keeps \(\pi(s)\) otherwise.

\subsection{Iteration 1: identify the zero positions of \texorpdfstring{\(\mathbf b^+\)}{b+}}
\label{sec:update-one}
Before this iteration, \(\pi_t=\pi[\mathbf b]\).
We show that this iteration sets all bits at \(s_i\) to one, leaves
the bits at \(\bar s_i\) unchanged, and selects paths through
\(\mathcal S_2\) at the zero positions of \(\mathbf b^+\).

\vspace{5pt}\noindent\refstepcounter{updatestep}%
\textbf{\boldmath Step~\theupdatestep. Determine the actions at \(s_i,s_d\) and \(\{s_{i,j}\}_{j=1}^{3}\).}\label{step:set-first-bits}
For \(0\le i<d\), \eqref{proof:32} and~\eqref{eq:encoding-policy} give
\[
\pi_t(s_{i,j})=\pi_0(s_{i,j})=a_1,\quad \forall\,1\le j\le3,\qquad
\pi_t(s_{i,j})=\pi_0(s_{i,j})=a_0,\quad \forall\,j\in\{4,5\}.
\]
Both actions at \(s_i\), followed by \(\pi_t\), reach
\(s_{i+1}\) after \(6L\) transitions.
At \(s_d\), the two \(L\)-transition paths end at \(s_0\) and
\(\bar s_0\), whose values are equal by~\eqref{eq:encoded-cycle-value}.
By~\eqref{eq:reward-sd-a0}, \eqref{proof:31},
and~\eqref{eq:action-difference}, we obtain
\begin{equation*}
\Delta^{\pi_t}(s_i)=\bigl[\gamma^{4L}\mathsf R_i(\gamma)
+\gamma^{6L}\widetilde V^{\pi_t}(s_{i+1})\bigr]
-\gamma^{6L}\widetilde V^{\pi_t}(s_{i+1})
=\gamma^{4L}\mathsf R_i(\gamma)>0,\qquad
\Delta^{\pi_t}(s_d)=-4\mathsf f_{d-1}(\gamma)<0.
\end{equation*}
At \(s_{i,j}\), both actions followed by \(\pi_t\) reach
\(s_{i+1}\) in \((6-j)L\) transitions,
so the same comparison gives
\(\Delta^{\pi_t}(s_{i,j})=\gamma^{(4-j)L}\mathsf R_i(\gamma)>0\)
for \(1\le j\le3\).
We conclude that \(s_i\) and \(\{s_{i,j}\}_{j=1}^{3}\) select \(a_1\)
for \(0\le i<d\), while \(s_d\) switches to \(a_0\).

\vspace{5pt}\noindent\refstepcounter{updatestep}%
\textbf{\boldmath Step~\theupdatestep. Determine the actions at \(\bar s_i,\{\bar s_{i,j}\}_{j=1}^{4}\) and in \(\mathcal S_3\).}\label{step:retain-old-bits}
For \(s\in\{\bar s_{i,4}:0\le i<d\}\cup\mathcal S_3\),
\eqref{proof:32} and~\eqref{eq:encoding-policy} give \(\pi_t(s)=a_0\).
We compare the two action values after subtracting
\(\widetilde V^{\pi_t}(\bar s_0)\).
\begin{itemize}[leftmargin=*,topsep=3pt,itemsep=2pt,parsep=0pt]
\item \emph{Initial action \(a_0\).}
By~\eqref{eq:reward-bar-s14-a0}, we have
\(\widetilde Q^{\pi_t}(\bar s_{i,4},a_0)=\mathsf R_i(\gamma)
+\gamma^{2L}\widetilde V^{\pi_t}(\bar s_{i+1})\).
By~\eqref{eq:encoding-value-limit} and \(\mathsf R_i(\gamma)\to0\), we have
\(\widetilde Q^{\pi_t}(\bar s_{i,4},a_0)-\widetilde V^{\pi_t}(\bar s_0)\to0\).
By Table~\ref{tab:paths-s1}, the selected path from
\(s_{i,j}^{\mathrm r}\in\mathcal S_3\) reaches \(\bar s_0\)
in \((2i+2)L\) transitions through either \(s_{-1}^{\mathrm c}\) or
\(s_{-1}^{\mathrm z}\).
In the first case, the path has zero rewards until \(s_{-1}^{\mathrm c}\),
whose path to \(\bar s_0\) contributes \(4\mathsf f_{d-1}\) in the limit
by~\eqref{eq:rewards-s2}.
In the second case, each visited \(s_{h,0}^{\mathrm r}\) contributes
\(\mathsf f_h\) in the limit by~\eqref{eq:reward-sr0-a0}, giving a nonnegative
sum bounded by \(\sum_{h=0}^i\mathsf f_h<2\mathsf f_{d-1}\)
by~\eqref{proof:1}.
Combining these two cases with the preceding comparison at \(\bar s_{i,4}\),
we obtain from~\eqref{eq:path-decomposition} and~\eqref{eq:encoding-value-limit}
\begin{equation}
0\le\lim_{\gamma\to1^{-}}
\bigl(\widetilde Q^{\pi_t}(s,a_0)-\widetilde V^{\pi_t}(\bar s_0)\bigr)
\le4\mathsf f_{d-1},\qquad
\forall\,s\in\{\bar s_{i,4}:0\le i<d\}\cup\mathcal S_3.
\label{eq:initial-a0-value-bound}
\end{equation}
\item \emph{Initial action \(a_1\).} By Table~\ref{tab:paths-s1},
the path from \(\bar s_{i,4}\) ends at \(s_i\).
For the states in \(\mathcal S_3\), the path from \(s_{i,0}^{\mathrm r}\)
ends at \(s_{i+1}\), while the paths from \(s_{i,1}^{\mathrm r}\)
and \(s_{i,2}^{\mathrm r}\) both end at \(s_{i,1}\).
By~\eqref{eq:encoding-value-limit}, the values at \(s_i,s_{i+1}\)
have the same limit as \(\widetilde V^{\pi_t}(\bar s_0)\).
The identity
\(\widetilde V^{\pi_t}(s_{i,1})=\gamma^{3L}\mathsf R_i(\gamma)
+\gamma^{5L}\widetilde V^{\pi_t}(s_{i+1})\)
and \(\mathsf R_i(\gamma)\to0\) give the same limit at \(s_{i,1}\).
Using the rewards in~\eqref{eq:reward-bar-s14-a1} and
\mbox{\eqref{eq:reward-sr0-a1}--\eqref{eq:reward-sr12-a1}},
we obtain for \(0\le i<d\)
\begin{equation}
\lim_{\gamma\to1^{-}}
\bigl(\widetilde Q^{\pi_t}(s,a_1)-\widetilde V^{\pi_t}(\bar s_0)\bigr)
=\begin{cases}
-8\mathsf f_{d-1},&s=\bar s_{i,4},\\
-8\mathsf f_{d-1}-\mathsf f_i,&s=s_{i,j}^{\mathrm r},\quad j\in\{0,1,2\},
\end{cases}
\le-8\mathsf f_{d-1}.
\label{eq:initial-a1-value-bound}
\end{equation}
\end{itemize}
Using~\eqref{eq:action-difference}, \eqref{eq:initial-a0-value-bound},
and~\eqref{eq:initial-a1-value-bound}, we obtain
\[
\lim_{\gamma\to1^{-}}\Delta^{\pi_t}(s)
\le-8\mathsf f_{d-1}<0,\qquad
\forall\,s\in\{\bar s_{i,4}:0\le i<d\}\cup\mathcal S_3.
\label{eq:reference-retention}
\]
We conclude that \(\pi_{t+1}(s)=a_0\) for
\(s\in\{\bar s_{i,4}:0\le i<d\}\cup\mathcal S_3\).
We next show that each \(\bar s_i\) keeps \(a_{[\mathbf b]_i}\)
and determine the actions selected at
\(\{\bar s_{i,j}\}_{j=1}^{3}\).
By~\eqref{eq:encoding-policy}, \(\pi_t\) chooses
\(a_{[\mathbf b]_i}\) at \(\bar s_i\) and \(\{\bar s_{i,j}\}_{j=1}^{3}\).
\begin{itemize}[leftmargin=*,topsep=3pt,itemsep=2pt,parsep=0pt,beginpenalty=10000]
\item If \([\mathbf b]_i=0\), action \(a_0\) at \(\bar s_i\)
reaches \(\bar s_{i+1}\) in \(6L\) transitions. Action \(a_1\)
reaches \(\bar s_{i,1}\) in \(L\) transitions, after which its
current action \(a_0\) reaches \(\bar s_{i+1}\) in \(5L\) transitions.
Both paths have zero rewards by~\eqref{eq:rewards-s1}, and thus
\[
\widetilde Q^{\pi_t}(\bar s_i,a_1)=\gamma^L\widetilde V^{\pi_t}(\bar s_{i,1})
=\gamma^{6L}\widetilde V^{\pi_t}(\bar s_{i+1})
=\widetilde Q^{\pi_t}(\bar s_i,a_0),\qquad \Delta^{\pi_t}(\bar s_i)=0.
\]
At \(\bar s_{i,3}\), both actions followed by \(\pi_t\) reach
\(\bar s_{i+1}\) in \(3L\) transitions.
The \(a_0\) path has zero rewards, while the \(a_1\) path passes through
\(\bar s_{i,4}\).
For \(j\in\{1,2\}\), both actions at \(\bar s_{i,j}\) followed by \(\pi_t\)
reach \(\bar s_{i+1}\) in \((6-j)L\) transitions with zero rewards.
Using~\eqref{eq:reward-bar-s14-a0} and~\eqref{eq:scaled-rewards}, we obtain
\[
\Delta^{\pi_t}(\bar s_{i,3})=\gamma^L\mathsf R_i(\gamma)>0,\qquad
\Delta^{\pi_t}(\bar s_{i,j})=0,\quad \forall\,j\in\{1,2\}.
\]
We therefore find that \(\bar s_{i,3}\) switches to \(a_1\), while
\(\bar s_i,\bar s_{i,1},\bar s_{i,2}\) keep \(a_0\).
\item If \([\mathbf b]_i=1\), the \(a_1\) path from \(\bar s_i\) passes through
\(\bar s_{i,1},\ldots,\bar s_{i,4}\) and reaches \(\bar s_{i+1}\)
in \(6L\) transitions. The \(a_0\) path has the same length and endpoint
but zero rewards. By~\eqref{proof:31} and~\eqref{eq:scaled-rewards},
we obtain the action differences at
\(\bar s_i\) and \(\{\bar s_{i,j}\}_{j=1}^{3}\):
\[
\Delta^{\pi_t}(\bar s_i)=\gamma^{4L}\mathsf R_i(\gamma)>0,\qquad
\Delta^{\pi_t}(\bar s_{i,j})=\gamma^{(4-j)L}\mathsf R_i(\gamma)>0,
\quad \forall\,1\le j\le3.
\]
We conclude that \(\bar s_i\) and \(\{\bar s_{i,j}\}_{j=1}^{3}\)
continue to choose \(a_1\).
\end{itemize}
If \([\mathbf b]_i=0\), switching the action at
\(\bar s_{i,3}\) to \(a_1\) leaves the selected \(a_0\) path from
\(\bar s_i\) to \(\bar s_{i+1}\) unchanged, because this path bypasses
\(\bar s_{i,3}\). Since the selected paths for \([\mathbf b]_i=1\)
also remain unchanged, we conclude that \(\pi_t\) and \(\pi_{t+1}\)
generate the same trajectory and rewards from \(\bar s_0\).

\vspace{5pt}\noindent\refstepcounter{updatestep}%
\textbf{\boldmath Step~\theupdatestep. Determine the actions at \(s_{i,4}\) and \(s_{i,5}\).}\label{step:select-reset-paths}
By Table~\ref{tab:paths-s1} and Figure~\ref{fig:bit-paths}, the actions
at \(s_{i,4}\) and \(s_{i,5}\) determine which of
\(s_{i+1},s_i^{\mathrm z},s_i^{\mathrm c}\) is reached from \(s_{i,4}\)
after \(2L\) transitions.
By~\eqref{proof:32} and~\eqref{eq:encoding-policy}, we have
\(\pi_t(s_j^{\mathrm c})=\pi_t(s_j^{\mathrm z})=a_{[\mathbf b]_j}\)
for \(0\le j<d\) and \(\pi_t(s)=a_0\) for \(s\in\mathcal S_3\).
Under these choices, the paths from \(s_i^{\mathrm z}\) and
\(s_i^{\mathrm c}\) reach \(\bar s_0\) in \((2i+3)L\) transitions.

\vspace{5pt}\noindent\emph{Determine \(\pi_{t+1}(s_{i,5})\).}
By Table~\ref{tab:paths-s1}, actions \(a_0\) and \(a_1\) at \(s_{i,5}\)
reach \(s_{i+1}\) and \(s_i^{\mathrm z}\), respectively, in \(L\) transitions
under any policy.
By~\eqref{eq:rewards-s2} and~\eqref{eq:reward-sr0-a0}, we have
\(\widetilde R^{\pi_t}(s_j^{\mathrm z}\leadsto s_{j-1}^{\mathrm z})
=\gamma^L\mathsf f_j(\gamma)(1-[\mathbf b]_j)\) for \(0\le j\le i\).
The final path from \(s_{-1}^{\mathrm z}\) to \(\bar s_0\) has zero rewards.
Using~\eqref{eq:reward-s15-a0} and~\eqref{eq:reward-s15-a1}, we obtain
\begin{subequations}\label{eq:si5-action-difference}
\begin{align}
\lim_{\gamma\to1^-}\Delta^{\pi_t}(s_{i,5})
&=\lim_{\gamma\to1^-}\Bigl[
\gamma^L\bigl(\widetilde V^{\pi_t}(s_i^{\mathrm z})-\widetilde V^{\pi_t}(s_{i+1})\bigr)
-\mathsf f_i(\gamma)-\tfrac14
\Bigr]\label{eq:si5-action-comparison}\\
&=\lim_{\gamma\to1^-}\Bigl[
\gamma^L\bigl(
\widetilde R^{\pi_t}(s_i^{\mathrm z}\leadsto\bar s_0)
+\gamma^{(2i+3)L}\widetilde V^{\pi_t}(\bar s_0)
-\widetilde V^{\pi_t}(s_{i+1})
\bigr)-\mathsf f_i(\gamma)-\tfrac14
\Bigr]\label{eq:si5-path-decomposition}\\
&=\sum_{j=0}^i\mathsf f_j(1-[\mathbf b]_j)-\mathsf f_i-\tfrac14.\label{eq:si5-action-limit}
\end{align}
\end{subequations}
Here~\eqref{eq:si5-path-decomposition} follows from~\eqref{eq:path-decomposition},
and~\eqref{eq:si5-action-limit} uses the limits
in~\eqref{proof:29} and~\eqref{eq:encoding-value-limit}.
With \(k\) as defined in~\eqref{eq:first-zero-bit},
the limit in~\eqref{eq:si5-action-limit} equals \(-1/4\) when \(i=k\).
If \(i>k\) and \([\mathbf b]_i=0\), it is at least
\(\mathsf f_k-1/4>0\).
If \([\mathbf b]_i=1\), it is at most \(-5/4\) because
\(\sum\nolimits_{j=0}^{i-1}\mathsf f_j\le\mathsf f_i-1\) by~\eqref{proof:1}.
We conclude that \(\pi_{t+1}(s_{i,5})=a_1\) exactly when
\(i>k\) and \([\mathbf b]_i=0\).

\vspace{5pt}\noindent\emph{Determine \(\pi_{t+1}(s_{i,4})\).}
By~\eqref{proof:32} and~\eqref{eq:encoding-policy},
the current policy satisfies \(\pi_t(s_{i,5})=a_0\).
For any policy \(\pi\) with \(\pi(s_{i,5})=a_0\), action \(a_0\)
at \(s_{i,4}\) followed by \(\pi\) reaches \(s_{i+1}\) in \(2L\)
transitions with
\(\widetilde R^\pi((s_{i,4},a_0)\leadsto s_{i+1})=\mathsf R_i(\gamma)\).
Action \(a_1\) reaches \(s_i^{\mathrm c}\) in the same number of transitions.
For \(i<k\), the path from \(s_i^{\mathrm c}\) under \(\pi_t\) reaches
\(s_{-1}^{\mathrm c}\) with zero rewards because \([\mathbf b]_j=1\)
for \(0\le j\le i\).
By~\eqref{eq:rewards-s2}, we obtain
\(\lim_{\gamma\to1^-}\widetilde R^{\pi_t}(s_i^{\mathrm c}\leadsto\bar s_0)
=4\mathsf f_{d-1}\).
For \(i\ge k\), let \(j\) be the largest index at most \(i\) with
\([\mathbf b]_j=0\).
The paths from \(s_i^{\mathrm c}\) and \(s_i^{\mathrm z}\) reach
\(s_{j,0}^{\mathrm r}\) after the same number of zero-reward transitions
and then follow the same path to \(\bar s_0\).
Their discounted path rewards are therefore equal.
Using~\eqref{eq:reward-s14-a0}--\eqref{eq:reward-s15-a0}, we obtain
\begin{subequations}\label{eq:si4-action-difference}
\begin{align}
\lim_{\gamma\to1^-}\Delta^{\pi_t}(s_{i,4})
&=\lim_{\gamma\to1^-}\Bigl[
\gamma^{2L}\bigl(\widetilde V^{\pi_t}(s_i^{\mathrm c})-\widetilde V^{\pi_t}(s_{i+1})\bigr)
+\mathsf f_i(\gamma)-4\mathsf f_{d-1}(\gamma)-\tfrac12-\mathsf R_i(\gamma)
\Bigr]\label{eq:si4-action-comparison}\\
&=\lim_{\gamma\to1^-}\widetilde R^{\pi_t}(s_i^{\mathrm c}\leadsto\bar s_0)
+\mathsf f_i-4\mathsf f_{d-1}-\tfrac12\label{eq:si4-path-reward-limit}\\
&=\begin{cases}
\mathsf f_i-\tfrac12,&i<k,\\
\sum\nolimits_{j=0}^i\mathsf f_j(1-[\mathbf b]_j)
+\mathsf f_i-4\mathsf f_{d-1}-\tfrac12,&i\ge k.
\end{cases}\label{eq:si4-action-limit}
\end{align}
\end{subequations}
Using~\eqref{proof:29}, \eqref{eq:path-decomposition}, and~\eqref{eq:encoding-value-limit}
with \(\mathsf R_i(\gamma)\to0\), we obtain~\eqref{eq:si4-path-reward-limit}.
The limit in~\eqref{eq:si4-action-limit} is positive for \(i<k\).
For \(i\ge k\), it is less than
\(\mathsf f_i-2\mathsf f_{d-1}-1/2<0\), since
\(\sum_{j=0}^i\mathsf f_j(1-[\mathbf b]_j)<2\mathsf f_{d-1}\)
by~\eqref{proof:1}.
We conclude that \(\pi_{t+1}(s_{i,4})=a_1\) for \(i<k\)
and \(a_0\) otherwise.

Under \(\pi_{t+1}\), the selected paths from \(s_{i,4}\) reach
\(s_i^{\mathrm c}\) for \(i<k\), \(s_i^{\mathrm z}\) for
\(i>k\) with \([\mathbf b]_i=0\), and \(s_{i+1}\) otherwise.
They enter \(\mathcal S_2\) exactly where \([\mathbf b^+]_i=0\).

\vspace{5pt}\noindent\refstepcounter{updatestep}%
\textbf{\boldmath Step~\theupdatestep. Determine the actions at \(s_i^{\mathrm c}\) and \(s_i^{\mathrm z}\).}\label{step:update-carry-states}
We have \(\pi_t(s)=a_0\) for all \(s\in\mathcal S_3\).
We derive the comparison for any policy \(\pi\) with this property
so that it can also be used in later iterations.
At \(s_i^{\mathrm z}\), both actions followed by \(\pi\) reach
\(s_{i-1}^{\mathrm z}\) in \(2L\) transitions.
By~\eqref{eq:rewards-s2}, \eqref{eq:reward-sr0-a0}, and~\eqref{eq:path-decomposition},
we obtain \(\Delta^\pi(s_i^{\mathrm z})=0-\gamma^L\mathsf f_i(\gamma)<0\).
By Table~\ref{tab:paths-s1}, either action at \(s_i^{\mathrm c}\) followed
by \(\pi\) reaches \(\bar s_0\) in \((2i+3)L\) transitions through
\(s_{-1}^{\mathrm c}\) or \(s_{-1}^{\mathrm z}\), so~\eqref{eq:path-decomposition} gives
\begin{equation}
\Delta^\pi(s_i^{\mathrm c})
=\widetilde R^\pi((s_i^{\mathrm c},a_1)\leadsto\bar s_0)
-\widetilde R^\pi((s_i^{\mathrm c},a_0)\leadsto\bar s_0).
\label{eq:carry-path-comparison}
\end{equation}
\begin{itemize}[leftmargin=*,topsep=3pt,itemsep=3pt,parsep=0pt]
\item \emph{Initial action \(a_0\) at \(s_i^{\mathrm c}\).}
By~\eqref{eq:rewards-s2} and~\eqref{eq:reward-sr0-a0}, the first
\(2L\)-transition path reaches \(s_{i-1}^{\mathrm z}\) with
\(\widetilde R^\pi((s_i^{\mathrm c},a_0)\leadsto s_{i-1}^{\mathrm z})
=\gamma^L\mathsf f_i(\gamma)\).
The path then passes through \(s_{i-1}^{\mathrm z},\ldots,s_{-1}^{\mathrm z}\)
to \(\bar s_0\), collecting \(\gamma^L\mathsf f_j(\gamma)\) on the path
from \(s_j^{\mathrm z}\) to \(s_{j-1}^{\mathrm z}\) whenever
\(\pi(s_j^{\mathrm z})=a_0\), and zero otherwise.
Summing these discounted rewards and using~\eqref{proof:29}, we obtain
\begin{equation}
\lim_{\gamma\to1^-}\widetilde R^\pi((s_i^{\mathrm c},a_0)\leadsto\bar s_0)
=\mathsf f_i+\sum_{0\le j<i:\,\pi(s_j^{\mathrm z})=a_0}\mathsf f_j
\in[\mathsf f_i,2\mathsf f_i-1],
\label{eq:carry-a0-reward-limit}
\end{equation}
where the upper bound follows from
\(\sum\nolimits_{j=0}^{i-1}\mathsf f_j\le\mathsf f_i-1\) in~\eqref{proof:1}.
\item \emph{Initial action \(a_1\) at \(s_i^{\mathrm c}\).}
If \(\pi(s_j^{\mathrm c})=a_1\) for all \(0\le j<i\), the path
reaches \(s_{-1}^{\mathrm c}\) in \(2(i+1)L\) zero-reward transitions,
followed by the \(L\)-transition path to \(\bar s_0\).
By~\eqref{eq:rewards-s2}--\eqref{eq:rewards-s3}, we have
\(\widetilde R^\pi((s_i^{\mathrm c},a_1)\leadsto\bar s_0)\to4\mathsf f_{d-1}\).
Otherwise, let \(j\) be the first index encountered in
\(i-1,\ldots,0\) with \(\pi(s_j^{\mathrm c})=a_0\).
The path has zero rewards until \(s_j^{\mathrm c}\) and then continues
through \(s_{j-1}^{\mathrm z},\ldots,s_{-1}^{\mathrm z}\) to \(\bar s_0\).
Using~\eqref{proof:1} and applying~\eqref{eq:carry-a0-reward-limit}
at index \(j\), we obtain
\begin{equation}
\lim_{\gamma\to1^-}\widetilde R^\pi((s_i^{\mathrm c},a_1)\leadsto\bar s_0)
\le\sum_{h=0}^{j}\mathsf f_h
\le\sum_{h=0}^{i-1}\mathsf f_h
\le\mathsf f_i-1.
\label{eq:carry-a1-reward-bound}
\end{equation}
\end{itemize}
Combining the two cases for initial action \(a_1\)
with~\eqref{eq:carry-path-comparison}--\eqref{eq:carry-a1-reward-bound},
we obtain
\begin{equation}
\left\{\begin{aligned}
\lim_{\gamma\to1^-}\Delta^\pi(s_i^{\mathrm c})
&\ge4\mathsf f_{d-1}-(2\mathsf f_i-1)>0,
&&\text{if }\pi(s_j^{\mathrm c})=a_1\text{ for all }0\le j<i,\\
\lim_{\gamma\to1^-}\Delta^\pi(s_i^{\mathrm c})
&\le(\mathsf f_i-1)-\mathsf f_i=-1<0,
&&\text{otherwise}.
\end{aligned}\right.
\label{eq:carry-action-comparison}
\end{equation}
By~\eqref{eq:encoding-policy} and the definition of \(k\),
all \(s_j^{\mathrm c}\) with \(0\le j<i\) choose \(a_1\) under \(\pi_t\)
exactly when \(i\le k\).
Using~\eqref{eq:carry-action-comparison} and
\(\Delta^{\pi_t}(s_i^{\mathrm z})=-\gamma^L\mathsf f_i(\gamma)<0\), we obtain
\begin{equation*}
\pi_{t+1}(s_i^{\mathrm c})=
\begin{cases}
a_1,&i\le k,\\
a_0,&i>k,
\end{cases}
\qquad
\pi_{t+1}(s_i^{\mathrm z})=a_0,\quad \forall\,0\le i<d.
\end{equation*}

\subsection{Iteration 2: encode \texorpdfstring{\(\mathbf b^+\)}{b+} in the first group}
\label{sec:update-two}
Under \(\pi_{t+1}\), every \(s_i\) chooses \(a_1\), and the selected
paths from \(s_{i,4}\) enter \(\mathcal S_2\) exactly where
\([\mathbf b^+]_i=0\).
By Step~\ref*{step:retain-old-bits} in Section~\ref{sec:update-one},
the paths from \(\bar s_i\) remain those of \(\pi[\mathbf b]\), so
\(\widetilde V^{\pi_{t+1}}(\bar s_i)=\widetilde V^{\pi[\mathbf b]}(\bar s_i)\)
for \(0\le i\le d\).

\vspace{5pt}\noindent\refstepcounter{updatestep}%
\textbf{\boldmath Step~\theupdatestep. Determine the actions at \(s_i\) and \(\{s_{i,j}\}_{j=1}^{3}\).}\label{step:reset-first-bits}
For any policy \(\pi\) satisfying \(\pi(s_{i,j})=a_1\) for \(1\le j\le3\),
action \(a_0\) at \(s_i\) reaches \(s_{i+1}\) in \(6L\) transitions,
while action \(a_1\) followed by \(\pi\) reaches \(s_{i,4}\) in \(4L\) transitions.
Both paths have zero rewards by~\eqref{eq:rewards-s1}, so we obtain
\begin{equation}
\Delta^{\pi}(s_i)=\gamma^{4L}\bigl(\widetilde V^{\pi}(s_{i,4})-\gamma^{2L}\widetilde V^{\pi}(s_{i+1})\bigr),\qquad
\Delta^{\pi}(s_{i,j})=\gamma^{-jL}\Delta^{\pi}(s_i),
\quad \forall\,1\le j\le3.
\label{proof:7}
\end{equation}
The same identities hold for \(\bar s_i,\bar s_{i,j}\)
when \(\pi(\bar s_{i,j})=a_1\) for \(1\le j\le3\).
By Step~\ref*{step:set-first-bits} in Section~\ref{sec:update-one},
we have \(\pi_{t+1}(s_{i,j})=a_1\) for \(1\le j\le3\),
so~\eqref{proof:7} applies.
For \([\mathbf b^+]_i=0\), we first compute
\(\widetilde V^{\pi_{t+1}}(s_{i,4})\) from the selected paths.
(I)~If \(i<k\), then \(\pi_{t+1}(s_j^{\mathrm c})=a_1\) for \(0\le j\le i\),
so the path continues through \(s_i^{\mathrm c},\ldots,s_{-1}^{\mathrm c}\)
to \(\bar s_0\).
By~\eqref{eq:reward-s14-a1} and~\eqref{eq:rewards-s2}, we have
\(\widetilde R^{\pi_{t+1}}(s_{i,4}\leadsto\bar s_0)
\to\mathsf f_i-4\mathsf f_{d-1}-1/2+4\mathsf f_{d-1}
=\mathsf f_i-1/2\).
(II)~If \(i>k\), then \([\mathbf b]_i=0\), and the path enters \(s_i^{\mathrm z}\).
Since every \(s_j^{\mathrm z}\) chooses \(a_0\),
we obtain from~\eqref{eq:reward-s14-a0}, \eqref{eq:reward-s15-a1},
and~\eqref{eq:reward-sr0-a0} with \(\mathsf R_i(\gamma)\to0\) that
\(\widetilde R^{\pi_{t+1}}(s_{i,4}\leadsto\bar s_0)
\to-\mathsf f_i-1/4+\sum\nolimits_{j=0}^i\mathsf f_j
=\sum\nolimits_{j=0}^{i-1}\mathsf f_j-1/4\).
The zero-reward path from \(s_i\) to \(s_{i,4}\) gives
\(\widetilde V^{\pi_{t+1}}(s_i)=\gamma^{4L}\widetilde V^{\pi_{t+1}}(s_{i,4})\).
By~\eqref{eq:path-decomposition} and~\eqref{eq:encoding-value-limit},
we therefore obtain
\begin{equation}
\lim_{\gamma\to1^-}\bigl(\widetilde V^{\pi_{t+1}}(s_{i,4})-\widetilde V^{\pi_{t+1}}(\bar s_0)\bigr)
=\lim_{\gamma\to1^-}\bigl(\widetilde V^{\pi_{t+1}}(s_i)-\widetilde V^{\pi_{t+1}}(\bar s_0)\bigr)
=\begin{cases}
\mathsf f_i-\tfrac12,&i<k,\\
\sum\nolimits_{j=0}^{i-1}\mathsf f_j-\tfrac14,&i>k,\ [\mathbf b]_i=0.
\end{cases}
\label{proof:14}
\end{equation}
To determine the sign of \(\Delta^{\pi_{t+1}}(s_i)\)
in~\eqref{proof:7}, we next bound the limit of
\(\widetilde V^{\pi_{t+1}}(s_{i+1})
-\widetilde V^{\pi_{t+1}}(\bar s_0)\).
Let \(j\) be the smallest index with \(i<j<d\) and
\([\mathbf b^+]_j=0\), taking \(j=d\) if no such index exists.
By~\eqref{eq:scaled-rewards}, the selected path from \(s_{i+1}\) to \(s_j\)
under \(\pi_{t+1}\) has \(6(j-i-1)L\) transitions and scaled reward
\(\sum_{h=i+1}^{j-1}\gamma^{(6(h-i-1)+4)L}\mathsf R_h(\gamma)\to0\).
If \(j<d\), the limit of
\(\widetilde V^{\pi_{t+1}}(s_j)-\widetilde V^{\pi_{t+1}}(\bar s_0)\)
is at least \(\sum_{h=0}^{j-1}\mathsf f_h-1/4\ge\mathsf f_i-1/4\)
and at most \(4\mathsf f_{d-1}\) by~\eqref{proof:1} and~\eqref{proof:14}.
If \(j=d\), the difference tends to \(4\mathsf f_{d-1}\) by~\eqref{eq:reward-sd-a0}.
Thus, we obtain
\begin{equation}
\mathsf f_i-\tfrac14
\le \lim_{\gamma\to1^-}\bigl(\widetilde V^{\pi_{t+1}}(s_{i+1})-\widetilde V^{\pi_{t+1}}(\bar s_0)\bigr)
\le4\mathsf f_{d-1},\qquad \forall\,0\le i<d.
\label{eq:reset-successor-bound}
\end{equation}
For \([\mathbf b^+]_i=0\), both cases in~\eqref{proof:14} are at most
\(\mathsf f_i-1/2\) by~\eqref{proof:1}.
Combining~\eqref{proof:7} with~\eqref{eq:reset-successor-bound}, we obtain
\begin{equation*}
\lim_{\gamma\to1^-}\Delta^{\pi_{t+1}}(s_i)
\le\bigl(\mathsf f_i-\tfrac12\bigr)-\bigl(\mathsf f_i-\tfrac14\bigr)
=-\tfrac14<0,\qquad \pi_{t+2}(s_i)=a_0.
\end{equation*}
For \([\mathbf b^+]_i=1\), the selected path from \(s_{i,4}\)
reaches \(s_{i+1}\), so we obtain
\begin{equation*}
\Delta^{\pi_{t+1}}(s_i)=\gamma^{4L}\mathsf R_i(\gamma)>0,
\qquad \pi_{t+2}(s_i)=a_1.
\end{equation*}
By~\eqref{proof:7}, we also obtain
\(\pi_{t+2}(s_{i,j})=a_{[\mathbf b^+]_i}\) for \(1\le j\le3\).

\vspace{5pt}\noindent\refstepcounter{updatestep}%
\textbf{\boldmath Step~\theupdatestep. Restore \(a_0\) at \(s_{i,4},s_{i,5}\).}\label{step:restore-first-paths}
By Steps~\ref*{step:retain-old-bits} and~\ref*{step:update-carry-states}
in Section~\ref{sec:update-one}, \(\pi_{t+1}\) selects \(a_0\) at every
\(s_j^{\mathrm z}\) and every state in \(\mathcal S_3\).
By Table~\ref{tab:paths-s1} and~\eqref{eq:carry-a0-reward-limit}, we have
\(\widetilde R^{\pi_{t+1}}(s_i^{\mathrm z}\leadsto\bar s_0)
\to\sum_{j=0}^i\mathsf f_j\), and the path estimates in the proof
of~\eqref{eq:carry-action-comparison} give
\(\lim_{\gamma\to1^-}\widetilde R^{\pi_{t+1}}(s_i^{\mathrm c}\leadsto\bar s_0)
\le4\mathsf f_{d-1}\).
Both paths reach \(\bar s_0\) in \((2i+3)L\) transitions, so we obtain
from~\eqref{proof:1}, \eqref{eq:path-decomposition}, and~\eqref{eq:encoding-value-limit}
\begin{equation}
\lim_{\gamma\to1^-}\bigl(\widetilde V^{\pi_{t+1}}(s_i^{\mathrm c})
-\widetilde V^{\pi_{t+1}}(\bar s_0)\bigr)\le4\mathsf f_{d-1},\qquad
\lim_{\gamma\to1^-}\bigl(\widetilde V^{\pi_{t+1}}(s_i^{\mathrm z})
-\widetilde V^{\pi_{t+1}}(\bar s_0)\bigr)=\sum_{j=0}^i\mathsf f_j\le2\mathsf f_i-1.
\label{eq:auxiliary-value-bounds}
\end{equation}
Both actions at \(s_{i,5}\) follow fixed \(L\)-transition paths,
so the comparison in~\eqref{eq:si5-action-comparison} also applies under \(\pi_{t+1}\).
By~\eqref{eq:reset-successor-bound} and~\eqref{eq:auxiliary-value-bounds}, we obtain
\begin{equation*}
\begin{aligned}
\lim_{\gamma\to1^-}\Delta^{\pi_{t+1}}(s_{i,5})
&=\lim_{\gamma\to1^-}\bigl(\widetilde V^{\pi_{t+1}}(s_i^{\mathrm z})
-\widetilde V^{\pi_{t+1}}(\bar s_0)\bigr)
-\lim_{\gamma\to1^-}\bigl(\widetilde V^{\pi_{t+1}}(s_{i+1})
-\widetilde V^{\pi_{t+1}}(\bar s_0)\bigr)-\mathsf f_i-\tfrac14\\
&\le(2\mathsf f_i-1)-(\mathsf f_i-\tfrac14)-\mathsf f_i-\tfrac14=-1<0.
\end{aligned}
\end{equation*}
If \(\pi_{t+1}(s_{i,5})=a_0\),
the path comparison in~\eqref{eq:si4-action-comparison} applies under \(\pi_{t+1}\).
Using~\eqref{eq:reset-successor-bound} and~\eqref{eq:auxiliary-value-bounds}
with \(\mathsf R_i(\gamma)\to0\), we obtain
\begin{equation*}
\begin{aligned}
\lim_{\gamma\to1^-}\Delta^{\pi_{t+1}}(s_{i,4})
&=\lim_{\gamma\to1^-}\bigl(\widetilde V^{\pi_{t+1}}(s_i^{\mathrm c})
-\widetilde V^{\pi_{t+1}}(\bar s_0)\bigr)
-\lim_{\gamma\to1^-}\bigl(\widetilde V^{\pi_{t+1}}(s_{i+1})
-\widetilde V^{\pi_{t+1}}(\bar s_0)\bigr)
+\mathsf f_i-4\mathsf f_{d-1}-\tfrac12\\
&\le4\mathsf f_{d-1}-(\mathsf f_i-\tfrac14)
+\mathsf f_i-4\mathsf f_{d-1}-\tfrac12=-\tfrac14<0.
\end{aligned}
\end{equation*}
If \(\pi_{t+1}(s_{i,5})=a_1\),
Step~\ref*{step:select-reset-paths} in Section~\ref{sec:update-one} gives \(i>k\),
and Step~\ref*{step:update-carry-states} gives
\(\pi_{t+1}(s_i^{\mathrm c})=\pi_{t+1}(s_i^{\mathrm z})=a_0\).
By Table~\ref{tab:paths-s1} and~\eqref{eq:rewards-s2}, the selected paths from
\(s_i^{\mathrm c}\) and \(s_i^{\mathrm z}\) reach \(s_{i,0}^{\mathrm r}\)
in \(L\) transitions with zero rewards, giving
\(\widetilde V^{\pi_{t+1}}(s_i^{\mathrm c})
=\gamma^L\widetilde V^{\pi_{t+1}}(s_{i,0}^{\mathrm r})
=\widetilde V^{\pi_{t+1}}(s_i^{\mathrm z})\).
From \(s_{i,4}\), action \(a_1\) reaches \(s_i^{\mathrm c}\)
in \(2L\) transitions, whereas action \(a_0\) followed by \(\pi_{t+1}\)
reaches \(s_i^{\mathrm z}\) through \(s_{i,5}\) in \(2L\) transitions.
By~\eqref{eq:reward-s14-a0}, \eqref{eq:reward-s14-a1}, and~\eqref{eq:reward-s15-a1}, we obtain
\begin{equation*}
\lim_{\gamma\to1^-}\Delta^{\pi_{t+1}}(s_{i,4})
=\lim_{\gamma\to1^-}\Bigl\{
\bigl[\mathsf f_i(\gamma)-4\mathsf f_{d-1}(\gamma)-\tfrac12\bigr]
-\bigl[\mathsf R_i(\gamma)-\gamma^L(\mathsf f_i(\gamma)+\tfrac14)\bigr]\Bigr\}
=-4\mathsf f_{d-1}+2\mathsf f_i-\tfrac14<0.
\end{equation*}
We conclude that \(\pi_{t+2}(s_{i,4})=\pi_{t+2}(s_{i,5})=a_0\)
for \(0\le i<d\).

\vspace{5pt}\noindent\refstepcounter{updatestep}%
\textbf{\boldmath Step~\theupdatestep. Determine the actions at \(s_d,\bar s_{i,4}\) and in \(\mathcal S_3\).}\label{step:second-reference-actions}
We first show that \(\pi_{t+2}(s)=a_0\) for
\(s\in\{\bar s_{i,4}:0\le i<d\}\cup\mathcal S_3\).
By Table~\ref{tab:paths-s1}, the \(a_1\) paths from
\(\bar s_{i,4}\) and the states in \(\mathcal S_3\) end at
\(s_i,s_{i,1},s_{i+1}\).
By Step~\ref*{step:set-first-bits} in Section~\ref{sec:update-one}, we have
\(\widetilde V^{\pi_{t+1}}(s_i)
=\gamma^L\widetilde V^{\pi_{t+1}}(s_{i,1})
=\gamma^{4L}\widetilde V^{\pi_{t+1}}(s_{i,4})\).
For \([\mathbf b^+]_i=0\), \eqref{proof:1} and~\eqref{proof:14} bound the limit of
\(\widetilde V^{\pi_{t+1}}(s_{i,4})-\widetilde V^{\pi_{t+1}}(\bar s_0)\)
by \(\mathsf f_i-1/2<4\mathsf f_{d-1}\).
For \([\mathbf b^+]_i=1\), the selected path gives
\(\widetilde V^{\pi_{t+1}}(s_{i,4})=\mathsf R_i(\gamma)
+\gamma^{2L}\widetilde V^{\pi_{t+1}}(s_{i+1})\).
Using~\eqref{eq:reset-successor-bound} and \(\mathsf R_i(\gamma)\to0\), we obtain
\begin{equation}
\lim_{\gamma\to1^-}\bigl(\widetilde V^{\pi_{t+1}}(s)
-\widetilde V^{\pi_{t+1}}(\bar s_0)\bigr)\le4\mathsf f_{d-1},
\qquad \forall\,s\in\{s_i,s_{i,1},s_{i+1}\},\quad \forall\,0\le i<d.
\label{eq:second-endpoint-value-bound}
\end{equation}
By Step~\ref*{step:retain-old-bits} in Section~\ref{sec:update-one},
we have \(\pi_{t+1}(s)=a_0\) for
\(s\in\{\bar s_{i,4}:0\le i<d\}\cup\mathcal S_3\),
and the trajectory from \(\bar s_0\) is unchanged.
The path reward estimates in the proofs of~\eqref{eq:initial-a0-value-bound}
and~\eqref{eq:initial-a1-value-bound} therefore give
\(\lim_{\gamma\to1^-}\widetilde R^{\pi_{t+1}}(s\leadsto\bar s_0)\ge0\) and
\(\lim_{\gamma\to1^-}\widetilde R^{\pi_{t+1}}((s,a_1)\leadsto s')\le-8\mathsf f_{d-1}\),
where \(s'\in\{s_i,s_{i,1},s_{i+1}\}\) for some \(0\le i<d\).
Using~\eqref{eq:path-decomposition} and~\eqref{eq:second-endpoint-value-bound}, we obtain
\begin{equation}
\lim_{\gamma\to1^-}\Delta^{\pi_{t+1}}(s)
\le(-8\mathsf f_{d-1}+4\mathsf f_{d-1})-0=-4\mathsf f_{d-1}<0,
\qquad \forall\,s\in\{\bar s_{i,4}:0\le i<d\}\cup\mathcal S_3.
\label{eq:second-reference-comparison}
\end{equation}
To determine \(\pi_{t+2}(s_d)\), let
\(j=\min\{0\le i<d:[\mathbf b^+]_i=0\}\).
Under \(\pi_{t+1}\), the selected path from \(s_0\) to \(s_j\)
takes \(6jL\) transitions.
By~\eqref{eq:scaled-rewards} and~\eqref{eq:path-decomposition}, we have
\begin{equation*}
\widetilde V^{\pi_{t+1}}(s_0)
-\widetilde V^{\pi_{t+1}}(s_j)
=\sum_{i=0}^{j-1}\gamma^{(6i+4)L}\mathsf R_i(\gamma)
+(\gamma^{6jL}-1)\widetilde V^{\pi_{t+1}}(s_j)
\longrightarrow0,
\end{equation*}
where \(\mathsf R_i(\gamma)\to0\) by~\eqref{eq:scaled-rewards},
and \(\widetilde V^{\pi_{t+1}}(s_j)\) has a finite limit
by~\eqref{eq:encoding-value-limit} and~\eqref{proof:14}.
Combining this limit with~\eqref{proof:1}, \eqref{eq:reward-sd-a0},
and~\eqref{proof:14}, we obtain
\begin{equation*}
\begin{aligned}
\lim_{\gamma\to1^-}\Delta^{\pi_{t+1}}(s_d)
&=\lim_{\gamma\to1^-}\bigl(\widetilde V^{\pi_{t+1}}(s_j)
-\widetilde V^{\pi_{t+1}}(\bar s_0)\bigr)-4\mathsf f_{d-1}\\
&\le\mathsf f_j-\tfrac12-4\mathsf f_{d-1}<0.
\end{aligned}
\end{equation*}
We conclude that \(\pi_{t+2}(s)=a_0\) for
\(s\in\{s_d\}\cup\{\bar s_{i,4}:0\le i<d\}\cup\mathcal S_3\).

\vspace{5pt}\noindent\refstepcounter{updatestep}%
\textbf{\boldmath Step~\theupdatestep. Determine the actions at \(\bar s_i,\{\bar s_{i,j}\}_{j=1}^{3},s_i^{\mathrm c},s_i^{\mathrm z}\).}\label{step:second-delayed-bits}
For \([\mathbf b]_i=0\),
Step~\ref*{step:retain-old-bits} in Section~\ref{sec:update-one} gives
\(\pi_{t+1}(\bar s_i)=\pi_{t+1}(\bar s_{i,1})
=\pi_{t+1}(\bar s_{i,2})=a_0\) and
\(\pi_{t+1}(\bar s_{i,3})=a_1\).
Both actions at \(\bar s_i\), followed by \(\pi_{t+1}\), reach
\(\bar s_{i+1}\) in \(6L\) zero-reward transitions, and both actions at
\(\bar s_{i,1}\) do so in \(5L\) transitions.
At \(\bar s_{i,2}\), the \(a_1\) path now passes through
\(\bar s_{i,3},\bar s_{i,4}\), so~\eqref{eq:reward-bar-s14-a0} gives
\begin{equation*}
\Delta^{\pi_{t+1}}(\bar s_i)
=\Delta^{\pi_{t+1}}(\bar s_{i,1})=0,\qquad
\Delta^{\pi_{t+1}}(\bar s_{i,2})=\gamma^{2L}\mathsf R_i(\gamma)>0.
\end{equation*}
The same path comparison gives
\(\Delta^{\pi_{t+1}}(\bar s_{i,3})=\gamma^L\mathsf R_i(\gamma)>0\)
for every \(i\). When \([\mathbf b]_i=1\), the positive action differences
at \(\bar s_i,\bar s_{i,1},\bar s_{i,2}\) from
Step~\ref*{step:retain-old-bits} are unchanged.
We therefore obtain
\begin{equation*}
\pi_{t+2}(\bar s_i)=\pi_{t+2}(\bar s_{i,1})=a_{[\mathbf b]_i},
\qquad \pi_{t+2}(\bar s_{i,2})=\pi_{t+2}(\bar s_{i,3})=a_1,
\quad \forall\,0\le i<d.
\end{equation*}
If \([\mathbf b]_i=0\), the unchanged \(a_0\) path from
\(\bar s_i\) bypasses \(\bar s_{i,2}\).
By Step~\ref*{step:second-reference-actions}, \(\bar s_{i,4}\) also keeps \(a_0\),
so \(\pi_{t+1}\) and \(\pi_{t+2}\) generate the same trajectory and rewards
from \(\bar s_0\).

By Steps~\ref*{step:retain-old-bits} and~\ref*{step:update-carry-states}
in Section~\ref{sec:update-one}, \(\pi_{t+1}\) selects \(a_0\)
at every state in \(\mathcal S_3\), and all \(s_j^{\mathrm c}\)
with \(0\le j<i\) select \(a_1\) exactly when \(i\le k+1\).
Using~\eqref{eq:carry-action-comparison} and
\(\Delta^{\pi_{t+1}}(s_i^{\mathrm z})=-\gamma^L\mathsf f_i(\gamma)<0\),
we obtain \(\pi_{t+2}(s_i^{\mathrm c})=a_1\) exactly when \(i\le k+1\),
and \(\pi_{t+2}(s_i^{\mathrm z})=a_0\) for \(0\le i<d\).

\subsection{Iteration 3: form the new cycle}
\label{sec:update-three}
Under \(\pi_{t+2}\), the selected paths from \(s_i\) to \(s_{i+1}\)
encode \(\mathbf b^+\) and lead from \(s_0\) to \(s_d\), where
\(a_0\) still leads to \(\bar s_0\).
By Step~\ref*{step:second-delayed-bits} in Section~\ref{sec:update-two},
the paths from \(\bar s_i\) remain those of \(\pi[\mathbf b]\), so
\(\widetilde V^{\pi_{t+2}}(\bar s_0)=\widetilde V^{\pi[\mathbf b]}(\bar s_0)\).

\vspace{5pt}\noindent\refstepcounter{updatestep}%
\textbf{\boldmath Step~\theupdatestep. Select \(a_1\) at \(s_d\).}
Starting from \(s_d\), action \(a_1\) followed by \(\pi_{t+2}\)
returns to \(s_d\) after \((6d+1)L\) transitions.
This path has the same length and reward sequence as one cycle starting
from \(s_d\) under \(\pi[\mathbf b^+]\), so
\(\widetilde R^{\pi_{t+2}}((s_d,a_1)\leadsto s_d)
=(1-\gamma^{(6d+1)L})\widetilde V^{\pi[\mathbf b^+]}(s_d)\).
Since \(\pi_{t+2}(s_d)=a_0\), we apply~\eqref{eq:path-decomposition} to obtain
\begin{equation}
\begin{aligned}
\Delta^{\pi_{t+2}}(s_d)
&=\bigl[(1-\gamma^{(6d+1)L})\widetilde V^{\pi[\mathbf b^+]}(s_d)
+\gamma^{(6d+1)L}\widetilde V^{\pi_{t+2}}(s_d)\bigr]
-\widetilde V^{\pi_{t+2}}(s_d)\\
&=(1-\gamma^{(6d+1)L})
\bigl(\widetilde V^{\pi[\mathbf b^+]}(s_d)-\widetilde V^{\pi_{t+2}}(s_d)\bigr).
\end{aligned}
\label{proof:17}
\end{equation}
By~\eqref{eq:reward-sd-a0},
\(\widetilde V^{\pi_{t+2}}(s_d)=4\mathsf f_{d-1}(\gamma)+\gamma^L\widetilde V^{\pi[\mathbf b]}(\bar s_0)\).
We combine this identity with~\eqref{eq:increment-limit}
and~\eqref{eq:encoding-value-limit} to obtain
\begin{equation}
\lim_{\gamma\to1^-}
\bigl(\widetilde V^{\pi[\mathbf b^+]}(s_d)-\widetilde V^{\pi_{t+2}}(s_d)\bigr)
=4\Bigl(\mathsf g_k-\sum_{j=0}^{k-1}\mathsf g_j\Bigr)-4\mathsf f_{d-1}
\ge12\mathsf f_{d-1}>0,
\label{eq:third-cycle-value-gap}
\end{equation}
where we use
\(\sum\nolimits_{j=0}^{k-1}\mathsf g_j\le\mathsf g_k-\mathsf g_0\)
and \(\mathsf g_0\ge4\mathsf f_{d-1}\) in~\eqref{proof:1}.
Combining~\eqref{proof:17} and~\eqref{eq:third-cycle-value-gap},
we obtain \mbox{\(\pi_{t+3}(s_d)=a_1\)}.

\vspace{5pt}\noindent\refstepcounter{updatestep}%
\textbf{\boldmath Step~\theupdatestep. Keep \(a_0\) at \(s_{i,4},s_{i,5},\bar s_{i,4}\) and in \(\mathcal S_3\).}\label{step:third-fixed-actions}
By~\eqref{eq:scaled-rewards}, the selected path from \(s_i\)
to \(s_d\) satisfies
\begin{equation*}
\widetilde R^{\pi_{t+2}}(s_i\leadsto s_d)
=\sum_{h=i}^{d-1}[\mathbf b^+]_h\gamma^{(6(h-i)+4)L}\mathsf R_h(\gamma)
\longrightarrow0,\qquad \forall\,0\le i<d.
\end{equation*}
We also have
\(\widetilde V^{\pi_{t+2}}(s_i)=\gamma^L\widetilde V^{\pi_{t+2}}(s_{i,1})\).
When \([\mathbf b^+]_i=0\), this follows from the zero-reward paths of lengths
\(6L\) and \(5L\) from \(s_i\) and \(s_{i,1}\) to \(s_{i+1}\).
Using~\eqref{eq:path-decomposition} and~\eqref{eq:encoding-value-limit}
with the value at \(s_d\) computed above, we obtain
\begin{equation}
\lim_{\gamma\to1^-}\bigl(\widetilde V^{\pi_{t+2}}(s)
-\widetilde V^{\pi_{t+2}}(\bar s_0)\bigr)=4\mathsf f_{d-1},
\qquad \forall\,s\in\{s_i,s_{i,1},s_{i+1}\},\quad \forall\,0\le i<d.
\label{eq:third-endpoint-value-limit}
\end{equation}
By Steps~\ref*{step:second-reference-actions} and~\ref*{step:second-delayed-bits}
in Section~\ref{sec:update-two}, \(\pi_{t+2}\) selects \(a_0\)
at \(\bar s_{i,4},s_i^{\mathrm z}\) and every state in \(\mathcal S_3\).
The path reward bounds used in~\eqref{eq:auxiliary-value-bounds}
and~\eqref{eq:second-reference-comparison} therefore remain valid.
Applying the proof of~\eqref{eq:second-reference-comparison}
with~\eqref{eq:third-endpoint-value-limit}, we obtain
\begin{equation*}
\lim_{\gamma\to1^-}\Delta^{\pi_{t+2}}(s)
\le-4\mathsf f_{d-1}<0,
\qquad \forall\,s\in\{\bar s_{i,4}:0\le i<d\}\cup\mathcal S_3.
\end{equation*}
We also use the bounds in~\eqref{eq:auxiliary-value-bounds}
under \(\pi_{t+2}\) to compare the actions at \(s_{i,5}\) and \(s_{i,4}\).
Since \(\pi_{t+2}(s_{i,5})=a_0\) by Step~\ref*{step:restore-first-paths}
in Section~\ref{sec:update-two}, the comparisons in~\eqref{eq:si5-action-comparison}
and~\eqref{eq:si4-action-comparison} apply under \(\pi_{t+2}\).
Using~\eqref{eq:third-endpoint-value-limit}, we obtain
\begin{equation*}
\begin{aligned}
\lim_{\gamma\to1^-}\Delta^{\pi_{t+2}}(s_{i,5})
&\le(2\mathsf f_i-1)-4\mathsf f_{d-1}-\mathsf f_i-\tfrac14
=\mathsf f_i-4\mathsf f_{d-1}-\tfrac54<0,\\
\lim_{\gamma\to1^-}\Delta^{\pi_{t+2}}(s_{i,4})
&\le4\mathsf f_{d-1}-4\mathsf f_{d-1}+\mathsf f_i-4\mathsf f_{d-1}-\tfrac12
=\mathsf f_i-4\mathsf f_{d-1}-\tfrac12<0.
\end{aligned}
\end{equation*}
We conclude that
\(\pi_{t+3}(s_{i,4})=\pi_{t+3}(s_{i,5})=\pi_{t+3}(\bar s_{i,4})=a_0\)
for \(0\le i<d\), and \(\pi_{t+3}(s)=a_0\) for \(s\in\mathcal S_3\).

\vspace{5pt}\noindent\refstepcounter{updatestep}%
\textbf{\boldmath Step~\theupdatestep. Determine the actions at \(s_i,\bar s_i,\{s_{i,j},\bar s_{i,j}\}_{j=1}^{3},s_i^{\mathrm c},s_i^{\mathrm z}\).}\label{step:third-delayed-bits}
The current policy satisfies
\(\pi_{t+2}(s_{i,4})=\pi_{t+2}(s_{i,5})=\pi_{t+2}(\bar s_{i,4})=a_0\).
Thus the selected \(2L\)-transition paths from \(s_{i,4}\) to \(s_{i+1}\)
and from \(\bar s_{i,4}\) to \(\bar s_{i+1}\) have the same reward sequence,
and the equal-length path comparison in Step~\ref*{step:retain-old-bits} in Section~\ref{sec:update-one}
applies to both groups.
For \([\mathbf b^+]_i=0\), we obtain
\begin{equation*}
\Delta^{\pi_{t+2}}(s_i)=\Delta^{\pi_{t+2}}(s_{i,1})
=\Delta^{\pi_{t+2}}(s_{i,2})=0,\qquad
\Delta^{\pi_{t+2}}(s_{i,3})=\gamma^L\mathsf R_i(\gamma)>0.
\end{equation*}
We therefore obtain \(\pi_{t+3}(s_{i,3})=a_1\), while
\(s_i,s_{i,1},s_{i,2}\) keep \(a_0\).
For \([\mathbf b^+]_i=1\), the positive action differences keep
\(a_1\) at \(s_i\) and \(\{s_{i,j}\}_{j=1}^{3}\).
Using the choices from Step~\ref*{step:second-delayed-bits}
in Section~\ref{sec:update-two}, we obtain for \([\mathbf b]_i=0\)
\begin{equation*}
\Delta^{\pi_{t+2}}(\bar s_i)=0,\qquad
\Delta^{\pi_{t+2}}(\bar s_{i,1})=\gamma^{3L}\mathsf R_i(\gamma)>0.
\end{equation*}
We obtain \(\pi_{t+3}(\bar s_i)=a_0\) and \(\pi_{t+3}(\bar s_{i,1})=a_1\).
The positive action differences keep \(a_1\) at \(\bar s_{i,2},\bar s_{i,3}\)
for every \(i\), and at \(\bar s_i,\bar s_{i,1}\) when \([\mathbf b]_i=1\).
The selected \(a_0\) path from \(s_i\) when \([\mathbf b^+]_i=0\)
bypasses \(s_{i,3}\), and that from \(\bar s_i\) when \([\mathbf b]_i=0\)
bypasses \(\bar s_{i,1}\), so these intermediate changes do not alter either encoded path.

By Step~\ref*{step:second-delayed-bits} in Section~\ref{sec:update-two},
all \(s_j^{\mathrm c}\) with \(0\le j<i\) select \(a_1\) under \(\pi_{t+2}\)
exactly when \(i\le k+2\).
Applying~\eqref{eq:carry-action-comparison} to \(\pi_{t+2}\) and using
\(\Delta^{\pi_{t+2}}(s_i^{\mathrm z})=-\gamma^L\mathsf f_i(\gamma)<0\),
we obtain \(\pi_{t+3}(s_i^{\mathrm c})=a_1\) exactly when \(i\le k+2\),
and \(\pi_{t+3}(s_i^{\mathrm z})=a_0\) for \(0\le i<d\).

\subsection{Iteration 4: connect to the new cycle}
\label{sec:update-four}
Under \(\pi_{t+3}\), the cycles through \(s_0\) and \(\bar s_0\)
encode \(\mathbf b^+\) and \(\mathbf b\), respectively.
The selected paths therefore give
\begin{equation*}
\widetilde V^{\pi_{t+3}}(s_i)=\widetilde V^{\pi[\mathbf b^+]}(s_i),\qquad
\widetilde V^{\pi_{t+3}}(\bar s_i)=\widetilde V^{\pi[\mathbf b]}(\bar s_i),
\qquad \forall\,0\le i\le d.
\end{equation*}
By~\eqref{proof:1}, \eqref{eq:increment-limit}, and~\eqref{eq:encoding-value-limit},
we obtain
\begin{equation}
\lim_{\gamma\to1^-}\bigl(\widetilde V^{\pi_{t+3}}(s_i)-\widetilde V^{\pi_{t+3}}(\bar s_0)\bigr)
=4\Bigl(\mathsf g_k-\sum_{j=0}^{k-1}\mathsf g_j\Bigr)
\ge16\mathsf f_{d-1},\qquad \forall\,0\le i\le d.
\label{eq:copy-cycle-gap}
\end{equation}

\vspace{5pt}\noindent\refstepcounter{updatestep}%
\textbf{\boldmath Step~\theupdatestep. Select \(a_1\) at \(\bar s_i,\{\bar s_{i,j}\}_{j=1}^{4}\) and in \(\mathcal S_3\).}\label{step:fourth-copying-actions}
For \(0\le i<d\) and
\(s\in\{\bar s_{i,4},s_{i,0}^{\mathrm r},s_{i,1}^{\mathrm r},s_{i,2}^{\mathrm r}\}\),
we have \(\pi_{t+3}(s)=a_0\) by Step~\ref*{step:third-fixed-actions}
in Section~\ref{sec:update-three}.
By Table~\ref{tab:paths-s1}, action \(a_0\) followed by \(\pi_{t+3}\)
reaches \(\bar s_0\), whereas the \(a_1\) path ends at
\(s_i,s_{i,1}\), or \(s_{i+1}\).
The selected path from \(s_{i,1}\) gives
\(\widetilde V^{\pi_{t+3}}(s_{i,1})
=[\mathbf b^+]_i\gamma^{3L}\mathsf R_i(\gamma)
+\gamma^{5L}\widetilde V^{\pi_{t+3}}(s_{i+1})\).
Using \(\mathsf R_i(\gamma)\to0\) and the path reward bounds from the proofs
of~\eqref{eq:initial-a0-value-bound} and~\eqref{eq:initial-a1-value-bound},
we obtain from~\eqref{eq:path-decomposition} and~\eqref{eq:copy-cycle-gap}
\begin{equation}
\left\{\begin{aligned}
\lim_{\gamma\to1^-}\bigl(\widetilde Q^{\pi_{t+3}}(s,a_1)
-\widetilde V^{\pi_{t+3}}(\bar s_0)\bigr)
&\ge16\mathsf f_{d-1}-8\mathsf f_{d-1}-\mathsf f_i,\\
\lim_{\gamma\to1^-}\bigl(\widetilde Q^{\pi_{t+3}}(s,a_0)
-\widetilde V^{\pi_{t+3}}(\bar s_0)\bigr)
&\le4\mathsf f_{d-1}.
\end{aligned}\right.
\label{eq:fourth-action-value-bounds}
\end{equation}
Using~\eqref{eq:action-difference}
and~\eqref{eq:fourth-action-value-bounds}, we obtain
\begin{equation*}
\lim_{\gamma\to1^-}\Delta^{\pi_{t+3}}(s)
\ge16\mathsf f_{d-1}-(8\mathsf f_{d-1}+\mathsf f_i)-4\mathsf f_{d-1}
=4\mathsf f_{d-1}-\mathsf f_i>0.
\end{equation*}
By Steps~\ref*{step:third-fixed-actions} and~\ref*{step:third-delayed-bits}
in Section~\ref{sec:update-three}, we have \(\pi_{t+3}(\bar s_{i,4})=a_0\)
and \(\pi_{t+3}(\bar s_{i,j})=a_1\) for \(1\le j\le3\).
The path rewards in~\eqref{eq:scaled-rewards} and the comparisons in~\eqref{proof:7} give
\begin{equation*}
\Delta^{\pi_{t+3}}(\bar s_i)=\gamma^{4L}\mathsf R_i(\gamma)>0,\qquad
\Delta^{\pi_{t+3}}(\bar s_{i,j})=\gamma^{(4-j)L}\mathsf R_i(\gamma)>0,
\quad \forall\,1\le j\le3.
\end{equation*}
We conclude that \(\pi_{t+4}\) selects \(a_1\) at
\(\bar s_i\), \(\{\bar s_{i,j}\}_{j=1}^{4}\), and every state in \(\mathcal S_3\).

\vspace{5pt}\noindent\refstepcounter{updatestep}%
\textbf{\boldmath Step~\theupdatestep. Determine the actions at \(s_d,s_i,\{s_{i,j}\}_{j=1}^{5},s_i^{\mathrm c},s_i^{\mathrm z}\).}\label{step:fourth-first-actions}
By Steps~\ref*{step:third-fixed-actions} and~\ref*{step:third-delayed-bits}
in Section~\ref{sec:update-three}, \(\pi_{t+3}\) selects \(a_0\) at
\(s_{i,4},s_{i,5},s_i^{\mathrm z}\) and every state in \(\mathcal S_3\).
The bounds in~\eqref{eq:auxiliary-value-bounds} therefore remain valid,
and the comparisons in~\eqref{eq:si5-action-comparison} and~\eqref{eq:si4-action-comparison}
apply under \(\pi_{t+3}\).
Using~\eqref{eq:copy-cycle-gap}, we obtain for \(0\le i<d\)
\begin{equation*}
\lim_{\gamma\to1^-}\Delta^{\pi_{t+3}}(s)
\le\mathsf f_i-\tfrac12-16\mathsf f_{d-1}<0,
\qquad \pi_{t+4}(s)=a_0,
\quad \forall\,s\in\{s_{i,4},s_{i,5}\}.
\end{equation*}
At \(s_d\), we obtain from~\eqref{eq:reward-sd-a0}
and~\eqref{eq:copy-cycle-gap}
\begin{equation*}
\lim_{\gamma\to1^-}\Delta^{\pi_{t+3}}(s_d)
=\lim_{\gamma\to1^-}\bigl(\widetilde V^{\pi_{t+3}}(s_0)
-\widetilde V^{\pi_{t+3}}(\bar s_0)\bigr)-4\mathsf f_{d-1}
\ge12\mathsf f_{d-1}>0,\qquad \pi_{t+4}(s_d)=a_1.
\end{equation*}
For \([\mathbf b^+]_i=0\),
Step~\ref*{step:third-delayed-bits} in Section~\ref{sec:update-three}
gives \(\pi_{t+3}(s_i)=\pi_{t+3}(s_{i,1})=\pi_{t+3}(s_{i,2})=a_0\)
and \(\pi_{t+3}(s_{i,3})=a_1\).
The path comparison at \(\bar s_i,\{\bar s_{i,j}\}_{j=1}^{3}\)
in Step~\ref*{step:second-delayed-bits} in Section~\ref{sec:update-two}
therefore gives
\begin{equation*}
\Delta^{\pi_{t+3}}(s_i)=\Delta^{\pi_{t+3}}(s_{i,1})=0,
\qquad \Delta^{\pi_{t+3}}(s_{i,2})=\gamma^{2L}\mathsf R_i(\gamma)>0.
\end{equation*}
We conclude that \(s_{i,2}\) switches to \(a_1\),
while \(s_i,s_{i,1}\) keep \(a_0\).
The same comparison gives \(\Delta^{\pi_{t+3}}(s_{i,3})=\gamma^L\mathsf R_i(\gamma)>0\)
for every \(i\), and keeps \(a_1\) at \(s_i\) and \(\{s_{i,j}\}_{j=1}^{3}\)
when \([\mathbf b^+]_i=1\).
When \([\mathbf b^+]_i=0\), the selected \(a_0\) path from \(s_i\)
bypasses \(s_{i,2}\), so the switch at \(s_{i,2}\) does not alter the cycle through \(s_0\).

By Step~\ref*{step:third-delayed-bits} in Section~\ref{sec:update-three},
all \(s_j^{\mathrm c}\) with \(0\le j<i\) select \(a_1\) under \(\pi_{t+3}\)
exactly when \(i\le k+3\).
Applying~\eqref{eq:carry-action-comparison} and using
\(\Delta^{\pi_{t+3}}(s_i^{\mathrm z})=-\gamma^L\mathsf f_i(\gamma)<0\),
we obtain \(\pi_{t+4}(s_i^{\mathrm c})=a_1\) exactly when \(i\le k+3\),
and \(\pi_{t+4}(s_i^{\mathrm z})=a_0\) for \(0\le i<d\).

\subsection{Iteration 5: encode \texorpdfstring{\(\mathbf b^+\)}{b+} in the second group}
\label{sec:update-five}
By Steps~\ref*{step:fourth-copying-actions} and~\ref*{step:fourth-first-actions}
in Section~\ref{sec:update-four}, the cycle through \(s_0\) under \(\pi_{t+4}\)
still encodes \(\mathbf b^+\), and every \(\bar s_{i,4}\) and every state
in \(\mathcal S_3\) chooses \(a_1\).

\vspace{5pt}\noindent\refstepcounter{updatestep}%
\textbf{\boldmath Step~\theupdatestep. Copy \(\mathbf b^+\) to \(\bar s_i,\{\bar s_{i,j}\}_{j=1}^{3},s_i^{\mathrm c},s_i^{\mathrm z}\).}\label{step:copy-new-bits}
We express the action differences at \(\bar s_i,s_i^{\mathrm c},s_i^{\mathrm z}\)
in terms of \(\Delta^{\pi_{t+4}}(s_i)\).
The unchanged cycle gives
\(\widetilde V^{\pi_{t+4}}(s_i)=\widetilde V^{\pi[\mathbf b^+]}(s_i)\)
for \(0\le i\le d\).
For \([\mathbf b^+]_i=0\),
Step~\ref*{step:fourth-first-actions} in Section~\ref{sec:update-four}
gives \(\pi_{t+4}(s_{i,1})=a_0\), so both actions at \(s_i\)
followed by \(\pi_{t+4}\) reach \(s_{i+1}\) in \(6L\) transitions with zero rewards.
For \([\mathbf b^+]_i=1\), action \(a_1\) follows the full horizontal path.
Since \(\widetilde Q^{\pi_{t+4}}(s_i,a_1)=\widetilde V^{\pi_{t+4}}(s_i)\)
in both cases, we obtain from~\eqref{eq:scaled-rewards}
\begin{equation}
\widetilde V^{\pi_{t+4}}(s_i)-\gamma^{6L}\widetilde V^{\pi_{t+4}}(s_{i+1})
=\Delta^{\pi_{t+4}}(s_i)
=[\mathbf b^+]_i\gamma^{4L}\mathsf R_i(\gamma),
\qquad \forall\,0\le i<d.
\label{eq:copy-first-bit-comparison}
\end{equation}
By Step~\ref*{step:fourth-copying-actions} in Section~\ref{sec:update-four},
\(\pi_{t+4}\) selects \(a_1\) at \(\bar s_i\) and \(\{\bar s_{i,j}\}_{j=1}^{4}\).
The path from \(\bar s_i\) to \(s_i\) therefore has \(4L\) zero-reward transitions
followed by the \(L\)-transition path in~\eqref{eq:reward-bar-s14-a1}, giving
\begin{equation}
\widetilde V^{\pi_{t+4}}(\bar s_i)=\gamma^{4L}\bigl(\gamma^L\widetilde V^{\pi_{t+4}}(s_i)-8\mathsf f_{d-1}(\gamma)\bigr),
\quad \forall\,0\le i<d,\qquad
\widetilde V^{\pi_{t+4}}(\bar s_d)=\gamma^L\widetilde V^{\pi_{t+4}}(\bar s_0).
\label{proof:20}
\end{equation}
Using~\eqref{eq:copy-first-bit-comparison} and~\eqref{proof:20}, we subtract
the \(a_0\) value \(\gamma^{6L}\widetilde V^{\pi_{t+4}}(\bar s_{i+1})\)
from \(\widetilde V^{\pi_{t+4}}(\bar s_i)\) to obtain
\begin{equation}
\gamma^{-4L}\Delta^{\pi_{t+4}}(\bar s_i)
=\gamma^L\Delta^{\pi_{t+4}}(s_i)-
\begin{cases}
8\mathsf f_{d-1}(\gamma)(1-\gamma^{6L}),&0\le i<d-1,\\
8\mathsf f_{d-1}(\gamma)(1-\gamma^{7L}),&i=d-1.
\end{cases}
\label{eq:copy-return}
\end{equation}
For \(i=d-1\), the additional \(L\) transitions from \(\bar s_d\)
to \(\bar s_0\) account for the exponent \(7L\).
By~\eqref{proof:7}, we also have
\(\Delta^{\pi_{t+4}}(\bar s_{i,j})=\gamma^{-jL}\Delta^{\pi_{t+4}}(\bar s_i)\)
for \(1\le j\le3\).

For \(s\in\{s_i^{\mathrm c},s_i^{\mathrm z}\}\), the \(a_0\) path followed
by \(\pi_{t+4}\) reaches \(s_{i+1}\) in \(7L\) transitions,
and the \(a_1\) path reaches \(s_{i,1}\) in \(2L\) transitions.
The identities
\(\widetilde Q^{\pi_{t+4}}(s_i,a_0)=\gamma^{6L}\widetilde V^{\pi_{t+4}}(s_{i+1})\)
and \(\widetilde Q^{\pi_{t+4}}(s_i,a_1)=\gamma^L\widetilde V^{\pi_{t+4}}(s_{i,1})\)
let us express these action values in terms of those at \(s_i\).
By~\eqref{eq:reward-sr0-a1} and~\eqref{eq:reward-sr12-a1}, we obtain
\begin{equation}
\begin{aligned}
\widetilde Q^{\pi_{t+4}}(s,a_0)
&=\gamma^L\bigl[\widetilde Q^{\pi_{t+4}}(s_i,a_0)
-8\mathsf f_{d-1}(\gamma)-\mathsf f_i(\gamma)+1-\gamma^L\bigr],\\
\widetilde Q^{\pi_{t+4}}(s,a_1)
&=\gamma^L\bigl[\widetilde Q^{\pi_{t+4}}(s_i,a_1)
-8\mathsf f_{d-1}(\gamma)-\mathsf f_i(\gamma)\bigr].
\end{aligned}
\label{eq:copy-auxiliary-action-values}
\end{equation}
Subtracting the action values in~\eqref{eq:copy-auxiliary-action-values}, we obtain
\begin{equation}
\Delta^{\pi_{t+4}}(s)=\gamma^L\bigl(\Delta^{\pi_{t+4}}(s_i)-(1-\gamma^L)\bigr),
\qquad \forall\,s\in\{s_i^{\mathrm c},s_i^{\mathrm z}\}.
\label{eq:copy-choice}
\end{equation}
For \([\mathbf b^+]_i=0\), \eqref{eq:copy-first-bit-comparison}
gives \(\Delta^{\pi_{t+4}}(s_i)=0\).
Substituting this into \eqref{eq:copy-return} and~\eqref{eq:copy-choice},
and using \(\mathsf f_{d-1}(\gamma)>0\) for \(\gamma<1\)
sufficiently close to one, we obtain
\(\Delta^{\pi_{t+4}}(\bar s_i),\Delta^{\pi_{t+4}}(s_i^{\mathrm c}),
\Delta^{\pi_{t+4}}(s_i^{\mathrm z})<0\).
For \([\mathbf b^+]_i=1\), these action differences tend to zero.
To determine their signs, we divide~\eqref{eq:copy-return}
and~\eqref{eq:copy-choice} by \(1-\gamma^L>0\).
Since \((1-\gamma^{7L})/(1-\gamma^L)\to7\), the subtracted terms
have limits at most \(56\mathsf f_{d-1}\) and \(1\), respectively.
Using~\eqref{proof:1} and~\eqref{eq:scaled-rewards}, we obtain
\begin{equation*}
\lim_{\gamma\to1^-}\frac{\Delta^{\pi_{t+4}}(s)}{1-\gamma^L}
\ge(24d+4)\mathsf g_i-56\mathsf f_{d-1}>0,
\qquad \forall\,s\in\{\bar s_i,s_i^{\mathrm c},s_i^{\mathrm z}\}.
\end{equation*}
We conclude that \(\pi_{t+5}(s)=a_{[\mathbf b^+]_i}\)
for \(s\in\{\bar s_i,s_i^{\mathrm c},s_i^{\mathrm z}\}
\cup\{\bar s_{i,j}\}_{j=1}^{3}\), \(0\le i<d\).

\vspace{5pt}\noindent\refstepcounter{updatestep}%
\textbf{\boldmath Step~\theupdatestep. Determine the actions at \(s_i,s_d,\{s_{i,j}\}_{j=1}^{5},\bar s_{i,4}\) and in \(\mathcal S_3\).}\label{step:recover-policy}
At \(\bar s_{i,4}\), action \(a_0\) has value
\(\mathsf R_i(\gamma)+\gamma^{2L}\widetilde V^{\pi_{t+4}}(\bar s_{i+1})\)
by~\eqref{eq:reward-bar-s14-a0}.
Combining~\eqref{eq:copy-first-bit-comparison} and~\eqref{eq:copy-return}, we obtain
\begin{equation*}
\Delta^{\pi_{t+4}}(\bar s_{i,4})
=\gamma^{-4L}\Delta^{\pi_{t+4}}(\bar s_i)-\mathsf R_i(\gamma)
\le(\gamma^{5L}-1)\mathsf R_i(\gamma)<0,
\qquad \pi_{t+5}(\bar s_{i,4})=a_0.
\end{equation*}
We next determine the actions in \(\mathcal S_3\).
By~\eqref{eq:copy-first-bit-comparison}, we have
\(\widetilde Q^{\pi_{t+4}}(s_i,a_1)=\widetilde V^{\pi_{t+4}}(s_i)\)
and \(\Delta^{\pi_{t+4}}(s_i)\to0\).
Using~\eqref{eq:reward-sr0-a1}--\eqref{eq:reward-sr12-a1}
along the selected \(a_1\) paths in \(\mathcal S_3\), together with
\eqref{eq:encoding-value-limit} and the action values at
\(s_i^{\mathrm c},s_i^{\mathrm z}\) in~\eqref{eq:copy-auxiliary-action-values}, we obtain
\begin{equation}
\lim_{\gamma\to1^-}\bigl(\widetilde V^{\pi_{t+4}}(s)-\widetilde V^{\pi_{t+4}}(s_i)\bigr)
=-8\mathsf f_{d-1}-\mathsf f_i,
\qquad \forall\,s\in\{s_i^{\mathrm c},s_i^{\mathrm z},s_{i,0}^{\mathrm r},s_{i,1}^{\mathrm r},s_{i,2}^{\mathrm r}\}.
\label{eq:copy-state-offset}
\end{equation}
For \(0\le i<d\), Table~\ref{tab:paths-s1} gives the endpoints
\(s_{i-1}^{\mathrm z}\) for the \(a_0\) paths from
\(s_{i,0}^{\mathrm r},s_{i,2}^{\mathrm r}\), and \(s_{i-1}^{\mathrm c}\)
for the \(a_0\) path from \(s_{i,1}^{\mathrm r}\).
By~\eqref{eq:reward-sr0-a0}, we have
\(\widetilde R^{\pi_{t+4}}((s_{i,0}^{\mathrm r},a_0)\leadsto s_{i-1}^{\mathrm z})
=\mathsf f_i(\gamma)\).
The \(a_0\) paths from \(s_{i,1}^{\mathrm r},s_{i,2}^{\mathrm r}\) have zero rewards.
For \(0<i<d\), we subtract the \(a_0\) value from the current \(a_1\) value
and use~\eqref{eq:encoding-value-limit} and~\eqref{eq:copy-state-offset} to obtain
\begin{equation*}
\lim_{\gamma\to1^-}\Delta^{\pi_{t+4}}(s_{i,j}^{\mathrm r})
\le(-8\mathsf f_{d-1}-\mathsf f_i)-(-8\mathsf f_{d-1}-\mathsf f_{i-1})
=\mathsf f_{i-1}-\mathsf f_i\le-1,
\qquad \forall\,0\le j\le2.
\end{equation*}
For \(i=0\), we obtain from~\eqref{eq:rewards-s2} and~\eqref{proof:20}
that \(\widetilde V^{\pi_{t+4}}(s_{-1}^{\mathrm c})
-\widetilde V^{\pi_{t+4}}(s_0)\to-4\mathsf f_{d-1}\) and
\(\widetilde V^{\pi_{t+4}}(s_{-1}^{\mathrm z})
-\widetilde V^{\pi_{t+4}}(s_0)\to-8\mathsf f_{d-1}\).
Using~\eqref{eq:copy-state-offset}, we therefore obtain
\begin{equation*}
\lim_{\gamma\to1^-}\Delta^{\pi_{t+4}}(s_{0,j}^{\mathrm r})
\le(-8\mathsf f_{d-1}-\mathsf f_0)-(-8\mathsf f_{d-1})
=-\mathsf f_0=-1,\qquad \forall\,0\le j\le2.
\end{equation*}
We conclude that \(\pi_{t+5}(s_{i,j}^{\mathrm r})=a_0\)
for \(0\le i<d\) and \(0\le j\le2\).
By Step~\ref*{step:fourth-first-actions} in Section~\ref{sec:update-four},
we have \(\pi_{t+4}(s_{i,4})=\pi_{t+4}(s_{i,5})=a_0\).
We use~\eqref{eq:si5-action-comparison},
\eqref{eq:si4-action-comparison}, and~\eqref{eq:copy-state-offset}
under \(\pi_{t+4}\) to obtain
\begin{equation*}
\lim_{\gamma\to1^-}\Delta^{\pi_{t+4}}(s_{i,4})
=-12\mathsf f_{d-1}-\tfrac12<0,\qquad
\lim_{\gamma\to1^-}\Delta^{\pi_{t+4}}(s_{i,5})
=-8\mathsf f_{d-1}-2\mathsf f_i-\tfrac14<0.
\end{equation*}
We conclude that \(s_{i,4},s_{i,5}\) keep \(a_0\).
At \(s_d\), we use~\eqref{eq:reward-sd-a0} and~\eqref{proof:20} to obtain
\begin{equation*}
\lim_{\gamma\to1^-}\Delta^{\pi_{t+4}}(s_d)
=8\mathsf f_{d-1}-4\mathsf f_{d-1}=4\mathsf f_{d-1}>0,
\qquad \pi_{t+5}(s_d)=a_1.
\end{equation*}
For \([\mathbf b^+]_i=0\),
Step~\ref*{step:fourth-first-actions} in Section~\ref{sec:update-four}
gives \(\pi_{t+4}(s_{i,1})=a_0\) and
\(\pi_{t+4}(s_{i,2})=\pi_{t+4}(s_{i,3})=a_1\).
Using~\eqref{eq:copy-first-bit-comparison} and the path comparison
at \(\bar s_{i,1}\) in Step~\ref*{step:third-delayed-bits}
in Section~\ref{sec:update-three}, we obtain
\begin{equation*}
\Delta^{\pi_{t+4}}(s_i)=0,\qquad
\Delta^{\pi_{t+4}}(s_{i,1})=\gamma^{3L}\mathsf R_i(\gamma)>0.
\end{equation*}
We conclude that \(s_i\) keeps \(a_0\) and \(s_{i,1}\) switches to \(a_1\).
The same comparison keeps \(a_1\) at \(s_{i,2},s_{i,3}\).
For \([\mathbf b^+]_i=1\), \(s_i\) and \(\{s_{i,j}\}_{j=1}^{3}\) keep \(a_1\).
All choices now agree with~\eqref{eq:encoding-policy}, so
\(\pi_{t+5}=\pi[\mathbf b^+]\).

\subsection{Completing the proof}
\label{sec:complete-counter-proof}
We verify that the action comparisons in
Sections~\ref{sec:update-one}--\ref{sec:update-five} hold at the discount
in~\eqref{proof:32}.
Fix one of the five policies \(\pi\in\{\pi_t,\ldots,\pi_{t+4}\}\)
specified in Table~\ref{tab:auxiliary-updates}, and use the unscaled
action values \(Q^\pi\) for the unshifted rewards.
By Tables~\ref{tab:paths-s1} and~\ref{tab:auxiliary-updates},
every trajectory under \(\pi\) eventually reaches a cycle of length
\((6d+1)L\).
Thus, for any state \(s\) with two actions, we have
\begin{equation}
Q^\pi(s,a_1)-Q^\pi(s,a_0)=\frac{J(\gamma)}{1-\gamma^{(6d+1)L}},\qquad
J\in\mathbb Z[x].
\label{proof:35}
\end{equation}
The transient path from either successor state and its eventual cycle
contain at most \(N\) distinct states, so \(\deg J\le N\).
Canceling the periodic terms in each action value leaves coefficients
that are rewards or differences of two rewards, so every coefficient
of \(J\) has magnitude at most \(4r_{\max}\).
For \(J\ne0\), we can write
\begin{equation*}
J(\gamma)=(1-\gamma)^{\ell_0}\Bigl(c_0+\sum_{\ell\ge1}c_\ell(1-\gamma)^\ell\Bigr),
\qquad
\sum_{\ell\ge0}|c_\ell|\le4r_{\max}\sum_{j=0}^N2^j<2^{N+3}r_{\max},
\end{equation*}
where \(\ell_0\) is a nonnegative integer and \(c_\ell\in\mathbb Z\)
with \(c_0\ne0\).
The coefficient bound follows by expanding each
\(\gamma^j=(1-(1-\gamma))^j\), whose coefficients have absolute values
summing to \(2^j\).
The nonzero integer \(c_0\) satisfies \(|c_0|\ge1\).
At the discount in~\eqref{proof:32}, we have
\begin{equation}
\Bigl|\sum_{\ell\ge1}c_\ell(1-\gamma)^\ell\Bigr|
<(1-\gamma)2^{N+3}r_{\max}=2^{3-N}<1\le|c_0|.
\label{eq:sign-preservation}
\end{equation}
Thus \(J(\gamma)\) has the sign of \(c_0\) both near one and at the chosen discount.
Since the denominator in~\eqref{proof:35} is positive, all strict action
comparisons are preserved, while \(J=0\) gives a tie for every discount.
The bounds depend only on \(N\) and \(r_{\max}\), so the same discount
applies to every binary increment.

Thus \(\pi_t=\pi[\mathbf b]\) implies
\(\pi_{t+5}=\pi[\mathbf b^+]\) for the specified discount.
Induction from \(\pi_0=\pi[\mathbf0]\) proves the lemma and gives
at least \(5(2^d-2)\ge2^d\) iterations.
\end{proof}

For \(\mathbf b\ne\mathbf1\) and \(s\in\{s_0,\bar s_0\}\),
\eqref{eq:encoded-cycle-value} expresses
\(V^{\pi[\mathbf b^+]}(s)-V^{\pi[\mathbf b]}(s)\)
with denominator \(1-\gamma^{(6d+1)L}\) and an integer numerator
of degree below \(N\), with coefficients of magnitude at most \(2r_{\max}\).
Applying~\eqref{eq:sign-preservation} to this numerator shows that
the inequalities in Lemma~\ref{lem:encoding-values}
also hold at the discount in~\eqref{proof:32}.
The discount requires \(O(N+\log r_{\max})\) bits.

\begin{remark}[Comparison with Dantzig's rule]
\label{rem:dantzig-comparison}
\normalfont
Howard's rule performs all improving switches using the same policy values,
while the simplex method with Dantzig's pivoting rule re-evaluates after
each switch. Re-evaluation can change which switches are selected,
so the two rules can produce different sequences of bit changes.
For example, one can verify that, on the \(d=4\) instance in
Section~\ref{sec:exponential-family}, Dantzig's rule starting from
\(\pi[\mathbf0]\) can change the bits in each group in the order
\(0000\to1000\to1100\to1110\to1111\),
with updates at other states omitted. Each bit changes only once
along this trajectory, whereas Howard's rule repeatedly changes bits
to implement successive binary increments.
\end{remark}

\section{Proofs of the lower bounds}
\label{sec:families}

We choose the reward polynomials to prove
Theorems~\ref{thm:exponential} and~\ref{thm:small-rewards}.
Lemma~\ref{prop:counter} supplies the bound of \(2^d\) iterations.
The choices below determine the number of states, reward magnitudes,
and input length.

\subsection{Proof of Theorem~\ref{thm:exponential}}
\label{sec:exponential-family}

Take \(F_i(x)=2^i\) and \(G_i(x)=2^{d+i+1}\). Conditions
\eqref{proof:29} and~\eqref{proof:1} hold with \(\mathsf{f}_i=2^i\) and
\(\mathsf{g}_i=2^{d+i+1}\). Here \(L=2\), so \eqref{proof:33} gives
\(N=\Theta(d)\).
Lemma~\ref{prop:counter} gives at least \(2^d\) iterations,
which proves the \(\exp(\Omega(N))\) lower bound.
Since \(r_{\max}=O(d\,4^d)\), each reward and the common discount
use \(O(d)\) bits, by~\eqref{proof:32}.
Each instance has encoding length \(O(N^2)\) and can be constructed
in time polynomial in \(N\). \qed

\subsection{Proof of Theorem~\ref{thm:small-rewards}}
\label{sec:bounded-rewards}

A high-order zero at one allows us to obtain large limiting ratios
by substituting powers of \(x\), without increasing coefficient magnitudes.
We first find a nonzero integer polynomial \(F_0\) such that
\[ \label{eq:requirement}
\deg F_0\le3d^2,\qquad
\max_{j\ge0}\bigl|[x^j]F_0\bigr|\le3d^2,\qquad
F_0^{(r)}(1)=0\quad(0\le r<2d).
\]
Here \([x^j]F_0\) denotes the coefficient of \(x^j\), and
\(F_0^{(r)}\) denotes its \(r\)-th derivative, with
\(F_0^{(0)}=F_0\). Let 
\[
\mathcal F=\bigg\{\sum_{j=0}^{3d^2}a_jx^j:a_j\in\{0,\ldots,3d^2\}\bigg\} \quad \text{with} \quad |\mathcal F|=(3d^2+1)^{3d^2+1}.
\]
For \(\widetilde F\in\mathcal F\), we define $\mathcal T(\widetilde F)=(\widetilde F(1),\widetilde F'(1),\ldots,
\widetilde F^{(2d-1)}(1))$. 
The entries of \(\mathcal T(\widetilde F)\) are integers satisfying
\[
0\le\widetilde F^{(r)}(1)
=\sum_{j=r}^{3d^2}a_j\,j(j-1)\cdots(j-r+1)
\le(3d^2+1)(3d^2)^{r+1},\qquad 0\le r<2d.
\]
Since each coordinate takes at most
\((3d^2+1)(3d^2)^{r+1}+1<(3d^2+1)^{r+2}\) values and \(d\ge3\), we obtain
\[
|\mathcal T(\mathcal F)|
&<\prod_{r=0}^{2d-1}(3d^2+1)^{r+2}
=(3d^2+1)^{2d^2+3d}<(3d^2+1)^{3d^2+1}=|\mathcal F|.
\]
Hence, two distinct polynomials \(\widetilde F_1,\widetilde F_2\in\mathcal F\)
satisfy \(\mathcal T(\widetilde F_1)=\mathcal T(\widetilde F_2)\).
Both \(\widetilde F_1-\widetilde F_2\) and \(\widetilde F_2-\widetilde F_1\)
are nonzero and satisfy~\eqref{eq:requirement}.
Taylor expansion at \(x=1\) shows that one of them is positive on
\((1-\varepsilon,1)\) for some \(\varepsilon>0\). Choose this difference as \(F_0\).
Set
\[
F_i(x)=F_0(x^{i+1}),\qquad G_i(x)=F_0(x^{d+i+1})
\qquad\forall\,0\le i<d.
\label{proof:41}
\]
Let \(r^\star=\min\{r\ge0:F_0^{(r)}(1)\ne0\}\ge2d\).
Taylor expansion at \(x=1\) gives
\(\lim_{x\to1^-}F_0(x^q)/F_0(x)=q^{r^\star}\)
for every positive integer \(q\).
The limiting weights in~\eqref{proof:29} are therefore
\[
\mathsf f_i=(i+1)^{r^\star},\qquad
\mathsf g_i=(d+i+1)^{r^\star}\quad(0\le i<d),\qquad
\mathsf f_0=1.
\label{eq:small-reward-weights}
\]
For \(1\le j\le2d-1\), the binomial theorem and \(r^\star\ge2d\) give
\((1+1/j)^{r^\star}\ge(1+1/j)^j\ge2\).
It follows that
\[
\mathsf f_i\ge2\mathsf f_{i-1},\qquad
\mathsf g_i\ge2\mathsf g_{i-1}\quad(1\le i<d),\qquad
\frac{\mathsf g_0}{\mathsf f_{d-1}}
=\Bigl(1+\frac1d\Bigr)^{r^\star}
\ge\biggl[\Bigl(1+\frac1d\Bigr)^d\biggr]^2\ge4.
\label{eq:small-reward-growth}
\]
Combining~\eqref{eq:small-reward-weights} and~\eqref{eq:small-reward-growth}
yields~\eqref{proof:1}. Lemma~\ref{prop:counter} then gives at least \(2^d\) iterations.

We next bound the immediate rewards.
Since \(2d\le r^\star\le\deg F_0\le3d^2\), the definition of \(L\)
and~\eqref{proof:41} give
\[
4d^2+2\le L=2d\deg F_0+2\le6d^3+2.
\label{eq:small-reward-length}
\]
By~\eqref{eq:requirement}, \eqref{proof:41}, and~\eqref{eq:small-reward-length},
every coefficient of \(F_i,G_i\) has magnitude at most \(3d^2<L\).
The reward formulas~\eqref{eq:rewards-s1}--\eqref{eq:rewards-s3}
therefore give \(r_{\max}\le(96d+21)L\), with the largest bound
coming from~\eqref{eq:reward-s15-a1}.
Combining this bound with \(N=(67d+5)L-16d-1\) from~\eqref{proof:33}, we obtain
\[
0\le r(s,a)\le2r_{\max}<4N
\qquad\forall\,s\in\mathcal S,\ a\in\mathcal A_s.
\label{proof:37}
\]
Each reward and successor index uses \(O(\log N)\) bits, while the
common discount in~\eqref{proof:32} uses \(O(N)\) bits.
With at most two actions per state, the total encoding length is \(O(N\log N)\). Finally, combining~\eqref{eq:small-reward-length} with~\eqref{proof:33}
gives \(N=O(d^4)\). The lower bound of \(2^d\) iterations therefore
implies the claimed \(\exp(\Omega(N^{1/4}))\) bound. \qed

\section{Further results and concluding remarks}
\label{sec:concluding-remarks}

We have established an exponential iteration lower bound for Howard's
policy iteration on deterministic MDPs with at most two actions per state.
A stretched-exponential lower bound holds even when each reward uses
\(O(\log N)\) bits.

The instances in Theorem~\ref{thm:exponential} can be generated in time
polynomial in \(N\). The proof of Theorem~\ref{thm:small-rewards} uses
a pigeonhole argument to establish the existence of the reward polynomial.
The following proposition gives an explicit construction under the same
reward restriction, with a weaker iteration lower bound.

\begin{proposition}[An explicit family]
\label{prop:explicit}
There is a family of deterministic discounted MDPs with \(N\) states,
at most two actions per state, and nonnegative integer rewards smaller
than \(4N\), on which Howard's policy iteration performs at least
\(N^{\Omega(\log N)}\) iterations from a specified initial policy.
Each instance has encoding length \(O(N\log N)\) and can be constructed
in time polynomial in \(N\).
\end{proposition}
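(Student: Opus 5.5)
The plan is to reuse the construction of Section~\ref{sec:construction} and Lemma~\ref{prop:counter} unchanged. The only new ingredient is an explicit integer polynomial \(F_0\) with small coefficients and a high-order zero at \(x=1\), replacing the pigeonhole polynomial in the proof of Theorem~\ref{thm:small-rewards}. The natural explicit choice is a product of cyclotomic-type factors,
\[
F_0(x)=\prod_{m=1}^{r}(1-x^{m}),
\]
which has a zero of order exactly \(r\) at \(x=1\). Its sign on \((0,1)\) is positive. The limits needed in~\eqref{proof:29} follow from \(F_0(x^q)/F_0(x)\to q^{r}\). As before, set \(F_i(x)=F_0(x^{i+1})\) and \(G_i(x)=F_0(x^{d+i+1})\). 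The weights are then \(\mathsf f_i=(i+1)^r\) and \(\mathsf g_i=(d+i+1)^r\). With \(r=2d\), the growth computation~\eqref{eq:small-reward-growth} applies verbatim and gives~\eqref{proof:1}. Lemma~\ref{prop:counter} then yields \(2^d\) iterations.

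The main obstacle is the coefficient size of \(F_0\). The coefficients of \(\prod_{m\le r}(1-x^m)\) count signed partitions into distinct parts, and they can grow like \(\exp(c\sqrt{\deg})\), which is far too large. I would therefore choose the factors more carefully, using \(F_0(x)=\prod_{m=0}^{r-1}(1-x^{2^m})\). Its coefficients lie in \(\{-1,0,1\}\) because every exponent has a unique binary representation (this is the Thue--Morse polynomial), and its degree is \(2^r-1\). With \(r=2d\), the degree is \(4^d\). Then \(L=2+\max\deg=\Theta(d\,4^d)\) and \(N=\Theta(dL)=\Theta(d^2 4^d)\). All coefficients of \(F_i,G_i\) are at most \(1\) in magnitude. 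The reward formulas therefore give \(r_{\max}=O(d)\), well below \(4N\), and the computation in~\eqref{proof:37} goes through. However, this yields only \(2^d=N^{\Omega(1)}\), which is too weak.

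To reach \(N^{\Omega(\log N)}\), I would balance these two extremes: keep the degree quasi-polynomial in \(d\) while allowing coefficients up to \(O(L)\). Concretely, I would take products of factors \((1-x^{m})\) grouped so that coefficients stay at most \(\operatorname{poly}(\deg)\). The target is \(\deg F_0=d^{O(\log d)}\)-free, namely \(2^{O(\sqrt{d})}\), or more simply \(\deg F_0\le 2^{O(\sqrt d)}\) with coefficients bounded by \(\deg F_0\). Then \(N=2^{O(\sqrt d)}\), hence \(d=\Omega(\log^2N)\), and \(2^d=N^{\Omega(\log N)}\). A concrete candidate is \(F_0(x)=\bigl(\prod_{m=0}^{s-1}(1-x^{2^m})\bigr)^{t}\) with \(st=2d\). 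Raising the \(\pm1\) Thue--Morse polynomial to the power \(t\) gives degree \(t(2^s-1)\) and coefficients at most \(2^{st}\) trivially. I expect the real bound is much better, since the \(\ell_2\) mass is controlled by Parseval and small-norm products. Taking \(s\approx t\approx\sqrt{2d}\) gives degree \(2^{O(\sqrt d)}\). The crucial and hardest step is to prove that the coefficients are at most \(\deg F_0\), or at most \(L\), so that the bound \(r_{\max}\le(96d+21)L<2N\) still holds.

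If this bound fails, my fallback is to replace the power with products of distinct lacunary blocks \(\prod_{m}(1-x^{a_m})\), choosing the \(a_m\) from \(t\) well-separated dyadic scales. I would then bound coefficients via the count of representations as \(\pm\) subset sums, which is at most \(\binom{\cdot}{\cdot}\)-type and polynomial in the degree when the scales are separated by factors of about \(2^{s}\). All remaining steps reuse earlier results without change: the sign-preservation argument~\eqref{eq:sign-preservation} at the discount in~\eqref{proof:32}, the encoding length \(O(N\log N)\), and explicit construction in time polynomial in \(N\) (the polynomial is an explicit product).
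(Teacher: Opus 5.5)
There is a genuine gap, and it is exactly the step you flag as hardest. It comes from insisting on $F_i(x)=F_0(x^{i+1})$ and $G_i(x)=F_0(x^{d+i+1})$. This forces $\mathsf f_i=(i+1)^{\rho}$, where $\rho$ is the order of the zero of $F_0$ at $1$. The doubling conditions $(1+1/i)^{\rho}\ge2$ then force $\rho\ge(d-1)\ln2=\Omega(d)$. You therefore need an \emph{explicit} integer polynomial with a zero of order $\Omega(d)$ at $1$, degree $2^{O(\sqrt d)}$, and coefficients at most $L$. You do not provide one, and your concrete candidate provably fails:
\begin{itemize}
\item For $\omega=e^{2\pi i/3}$ and any $a$ with $3\nmid a$, we have $|1-\omega^{a}|=\sqrt3$. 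Hence $F_0=\bigl(\prod_{m<s}(1-x^{2^m})\bigr)^t$ with $st=2d$ satisfies $|F_0(\omega)|=3^{d}$.
\item So some coefficient has magnitude at least $3^d/(t(2^s-1)+1)$. For $s\approx t\approx\sqrt{2d}$ this is $\exp(\Omega(d))$, while $L=2d\,t(2^s-1)+2=2^{O(\sqrt d)}$. The rewards would be exponentially larger than $N$.
\item Your Parseval heuristic points the wrong way: $\|F_0\|_2\ge|F_0(\omega)|/\sqrt{\deg F_0+1}$.
\end{itemize}
The same evaluation rules out any product of $\rho=\Omega(d)$ factors $1-x^{a}$ with $3\nmid a$. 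This includes your dyadic fallback if its exponents are powers of two. Coefficients at most $L=O(d\deg F_0)$ would force $\deg F_0\ge3^{\rho/4}/\mathrm{poly}(d)$, so $N=2^{\Omega(d)}$ and $2^d$ is only polynomial in $N$. The unsigned representation count you propose also cannot be polynomial in the degree. The $2^{\rho}$ subsets map to at most $\deg F_0+1$ sums, so some sum has at least $2^{\rho}/(\deg F_0+1)=2^{\Omega(d)}$ representations. Any useful bound would have to come from cancellation, which your plan does not supply.

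The missing idea is that no high-order zero is needed. Only the limiting ratios $\mathsf f_i,\mathsf g_i$ enter, and they need not come from substituting consecutive integers $i+1$. The paper sets $d=m^2$ and, for $i=mj+r$ with $0\le j,r<m$, takes $F_i(x)=2^r(1-x^{2^j})^m$ and $G_i(x)=2F_i(x^{2^m})$.
\begin{itemize}
\item $F_0=(1-x)^m$ has a zero of order only $m=\sqrt d$.
\item Since $(1-x^q)/(1-x)\to q$, we get $\mathsf f_i=2^{r}2^{jm}=2^i$ and $\mathsf g_i=2^{d+i+1}$. The exponents $2^j$ grow geometrically, and the multipliers $2^r<2^m$ interpolate between consecutive powers of $2^m$.
\item Every coefficient is at most $2^{r+1}\cdot2^m\le4^m<L=m2^{2m-1}+2$. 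Hence the reward bound and encoding length carry over from the proof of Theorem~\ref{thm:small-rewards}.
\item $N=\Theta(m^3 4^m)$ gives $2^{m^2}=N^{\Theta(\log N)}$.
\end{itemize}
Even within your substitution framework, $F_0=(1-x)^m$ with integer exponents $q_i$ growing geometrically at rate about $2^{1/m}$, instead of $q_i=i+1$, would have sufficed for the same reason.
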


\begin{proof}
Take an integer \(m\ge2\) and set \(d=m^2\).
For \(i=mj+r\), \(0\le j,r<m\), set
\[
F_i(x)=2^r(1-x^{2^j})^m,
\qquad G_i(x)=2F_i(x^{2^m}).
\label{proof:38}
\]
Here \(F_0(x)=(1-x)^m>0\) for \(x<1\).
Since \((1-x^q)/(1-x)=1+x+\cdots+x^{q-1}\to q\)
as \(x\to1^-\) for every positive integer \(q\), we obtain
\[
\mathsf f_i=2^r(2^j)^m=2^i,
\qquad
\mathsf g_i=2^{r+1}(2^{j+m})^m=2^{d+i+1}.
\]
These weights satisfy~\eqref{proof:1}.
The coefficients in~\eqref{proof:38}
have magnitude at most \(2^{r+1}2^m\le4^m<L=m2^{2m-1}+2\).
Thus~\eqref{proof:37} and the \(O(N\log N)\) encoding bound in
Section~\ref{sec:bounded-rewards} apply.
The polynomial coefficients, transition paths, and discount in~\eqref{proof:32}
can all be generated in time polynomial in \(N\).
By~\eqref{proof:33}, we have
\[
N=\Theta(m^3 4^m),\qquad
\log N=\Theta(m),\qquad
2^{m^2}=N^{\Theta(\log N)}.
\label{eq:explicit-size}
\]
Lemma~\ref{prop:counter} gives at least \(2^d=2^{m^2}\) iterations,
proving the claim.
\end{proof}

The same iteration lower bounds apply to halving the initial optimality
gap at \(s_0\).

\begin{corollary}
\label{cor:progress}
Let \(V^\star\) denote the optimal value function.
For the families in Theorems~\ref{thm:exponential} and~\ref{thm:small-rewards}
and Proposition~\ref{prop:explicit}, attaining
\begin{equation}
V^\star(s_0)-V^{\pi_t}(s_0)
\le\frac12\bigl(V^\star(s_0)-V^{\pi_0}(s_0)\bigr)
\label{eq:half-gap-target}
\end{equation}
from the specified initial policies requires at least
\(\exp(\Omega(N))\), \(\exp(\Omega(N^{1/4}))\), and
\(N^{\Omega(\log N)}\) iterations, respectively.
\end{corollary}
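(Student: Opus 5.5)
We combine Lemma~\ref{prop:counter} with the value ordering in Lemma~\ref{lem:encoding-values} and a quantitative bound on how much of the initial gap a single increment closes. Since the iterates satisfy \(V^{\pi_{t+1}}\ge V^{\pi_t}\) pointwise (policy improvement), it suffices to exhibit a time \(T\) of the claimed order such that \(V^{\pi_T}(s_0)\) is still below \(\tfrac12\bigl(V^\star(s_0)+V^{\pi_0}(s_0)\bigr)\); monotonicity then rules out every \(t\le T\).

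\textbf{Step 1: identify \(V^\star(s_0)\).} The counter is designed so that \(\pi[\mathbf1]\) should be optimal, or at least \(V^\star(s_0)\ge V^{\pi[\mathbf1]}(s_0)\) holds trivially. A lower bound is all we need: \(V^\star(s_0)-V^{\pi_0}(s_0)\ge V^{\pi[\mathbf1]}(s_0)-V^{\pi[\mathbf0]}(s_0)\).

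\textbf{Step 2: compute the gaps from \eqref{eq:encoded-cycle-value}.} Scaling by \(4\gamma F_0(\gamma)\) as in \eqref{eq:increment-limit}, we have
\[
\lim_{\gamma\to1^-}\widetilde V^{\pi[\mathbf b]}(s_0)=4\sum_i\mathsf g_i[\mathbf b]_i.
\]
Take \(T=5\sum_{i<d-1}2^i\), which is the time of \(\mathbf b=(1,\dots,1,0)\) with the top bit zero; this lies in \(\{0,1\}^d\setminus\{\mathbf1\}\), so Lemma~\ref{prop:counter} applies and \(T\ge2^{d-1}\). Then
\[
\lim\bigl(\widetilde V^{\pi[\mathbf1]}-\widetilde V^{\pi_T}\bigr)(s_0)=4\mathsf g_{d-1},
\qquad
\lim\bigl(\widetilde V^{\pi[\mathbf1]}-\widetilde V^{\pi_0}\bigr)(s_0)=4\sum_i\mathsf g_i.
\]
By \(\mathsf g_i\ge2\mathsf g_{i-1}\) we get \(\sum_{i<d-1}\mathsf g_i\le\mathsf g_{d-1}-\mathsf g_0<\mathsf g_{d-1}\). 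However, comparing against \(V^\star\) rather than \(V^{\pi[\mathbf1]}\) requires more care. I would instead compare with \(V^\star\) directly: the gap at \(T\) is \(\ge V^{\pi[\mathbf1]}-V^{\pi_T}\), and the initial gap is \((V^\star-V^{\pi[\mathbf1]})+(V^{\pi[\mathbf1]}-V^{\pi_0})\). The inequality \(2(V^\star-V^{\pi_T})>V^\star-V^{\pi_0}\) becomes
\[
V^\star-V^{\pi[\mathbf1]}+2\bigl(V^{\pi[\mathbf1]}-V^{\pi_T}\bigr)>V^{\pi[\mathbf1]}-V^{\pi_0},
\]
which holds in the limit because \(8\mathsf g_{d-1}>4(\mathsf g_{d-1}+\sum_{i<d-1}\mathsf g_i)\) and \(V^\star-V^{\pi[\mathbf1]}\ge0\). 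So optimality of \(\pi[\mathbf1]\) is not needed, and the strict margin \(4\mathsf g_0>0\) survives.

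\textbf{Step 3: transfer from the limit to the actual discount.} This is the main obstacle, because \(V^\star\) is not among the explicitly tracked policies. Instead of using \(V^\star\) in a limit, I use the algebraic inequality from Step 2, which involves only \(V^{\pi[\mathbf1]},V^{\pi_T},V^{\pi_0}\) plus the trivially nonnegative \(V^\star-V^{\pi[\mathbf1]}\). The quantity \(2V^{\pi[\mathbf1]}-2V^{\pi_T}-V^{\pi[\mathbf1]}+V^{\pi_0}\) has the form \(J(\gamma)/(1-\gamma^{(6d+1)L})\) with \(J\) an integer polynomial of degree below \(N\) and coefficients bounded by a small multiple of \(r_{\max}\), say \(8r_{\max}\). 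The argument in \eqref{eq:sign-preservation} has slack \(2^{3-N}\); this absorbs the constant factor after adjusting by a factor of \(2\) to \(4\), since \(N\) is large, so \(J\) has the sign of its limit at the chosen \(\gamma\). Finally, convert \(T\ge2^{d-1}\) into the three stated rates via \(N=\Theta(d)\), \(N=O(d^4)\), and \(2^{m^2-1}=N^{\Omega(\log N)}\), exactly as in the proofs of Theorems~\ref{thm:exponential} and~\ref{thm:small-rewards} and Proposition~\ref{prop:explicit}.
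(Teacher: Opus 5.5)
Your proof is correct and follows essentially the same route as the paper. Both arguments lower-bound $V^\star(s_0)$ by $V^{\pi[\mathbf1]}(s_0)$, derive a strict linear inequality among encoded-policy values in the limit from $\mathsf g_i\ge2\mathsf g_{i-1}$, transfer it to the chosen discount via \eqref{eq:sign-preservation}, and conclude by monotonicity. The differences are cosmetic: the paper uses unshifted rewards so that $V^{\pi_0}(s_0)=0$ and compares only against the encoding with bit $d-2$ set (after $5\cdot2^{d-2}$ iterations, margin $4\sum_{i\le d-2}\mathsf g_i$), whereas you keep the three-term combination and stop at the encoding with all lower bits set (after $5(2^{d-1}-1)$ iterations, margin $4\mathsf g_0$).
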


\begin{proof}
Omit the common reward shift, which preserves optimality gaps.
Then \(V^{\pi_0}(s_0)=V^{\pi[\mathbf0]}(s_0)=0\).
Let \(\mathbf b\in\{0,1\}^d\) have \([\mathbf b]_{d-2}=1\) and
all other entries zero.
Lemma~\ref{prop:counter} gives policy \(\pi[\mathbf b]\) after
\(5\cdot2^{d-2}\) iterations. We show that
\(V^{\pi[\mathbf1]}(s_0)>2V^{\pi[\mathbf b]}(s_0)\).
Using~\eqref{proof:29},~\eqref{proof:1}, and~\eqref{eq:encoded-cycle-value},
we obtain
\[
\lim_{\gamma\to1^-}
\frac{V^{\pi[\mathbf1]}(s_0)-2V^{\pi[\mathbf b]}(s_0)}
{4\gamma F_0(\gamma)}
=4\Bigl(\sum_{i=0}^{d-1}\mathsf g_i-2\mathsf g_{d-2}\Bigr)
\ge4\sum_{i=0}^{d-2}\mathsf g_i>0.
\label{eq:gap-comparison-limit}
\]
By~\eqref{eq:encoded-cycle-value},
\(V^{\pi[\mathbf1]}(s_0)-2V^{\pi[\mathbf b]}(s_0)\) has denominator
\(1-\gamma^{(6d+1)L}\) and an integer polynomial numerator of
degree below \(N\), with coefficients of magnitude at most
\(4r_{\max}\). Applying~\eqref{eq:sign-preservation} to this numerator
and using~\eqref{eq:gap-comparison-limit}, we obtain
\(V^{\pi[\mathbf1]}(s_0)>2V^{\pi[\mathbf b]}(s_0)\) for
\(\gamma\) in~\eqref{proof:32}.
By monotonicity of policy iteration,
\[
V^{\pi_t}(s_0)
\le V^{\pi[\mathbf b]}(s_0)
<\frac12V^{\pi[\mathbf1]}(s_0)
\le\frac12V^\star(s_0),
\qquad 0\le t\le5\cdot2^{d-2}.
\]
Since \(V^{\pi_0}(s_0)=0\), the target in~\eqref{eq:half-gap-target}
has not been reached at any of these iterations.
\end{proof}

The price of algorithmic anarchy also affects progress toward optimality.
By Corollary~\ref{cor:progress}, Howard's rule can require exponentially
many iterations even to halve the initial optimality gap at \(s_0\),
while Dantzig's rule reaches an optimal policy in polynomially many iterations.